\newtheorem{theorem}{{\bf Theorem}}
\newtheorem{corollary}{{\bf Corollary}}
\newtheorem{remark}{\bf Remark}
\newtheorem{lemma}{\bf Lemma}
\newcommand{\cL}{\mathcal{L}}
\DeclareMathOperator{\supp}{supp}
\definecolor{DukeBlue}{HTML}{001A57}
\definecolor{DarkRed}{rgb}{0.75, 0.0, 0.0}
\definecolor{DarkGreen}{rgb}{0.0, 0.5, 0.0}
\newcommand{\RNum}[1]{\uppercase\expandafter{\romannumeral #1\relax}}
\newtheorem{example}{Example}
\newtheorem{assumption}{{\bf Assumption}}
\begin{document}

\title{Uniform-in-Time Weak Error
  Analysis for  Stochastic
  Gradient Descent Algorithms via Diffusion Approximation}
\author{
Yuanyuan Feng  \thanks{Department of Mathematics, Penn State University, University Park, PA, 16802, USA (yzf58@psu.edu).}
\and
Tingran Gao \thanks{Committee on Computational and Applied Mathematics, Department of Statistics, University of Chicago, Chicago, IL, 60637, USA (tingrangao@galton.uchicago.edu).}
\and
Lei Li\thanks{School of Mathematical Sciences, Institute of Natural Sciences, MOE-LSC, Shanghai Jiao Tong University, Shanghai, 200240, P. R. China (leili2010@sjtu.edu.cn).}
\and
Jian-Guo Liu \thanks{Department of Mathematics and Department of Physics, Duke University, Durham, NC, 27708 (jliu@phy.duke.edu).}
\and
Yulong Lu \thanks{Department of Mathematics,
  Duke University, Durham, NC, 27708 (yulonglu@math.duke.edu).}}
\date{}
\maketitle

\begin{abstract}
Diffusion approximation provides weak approximation for stochastic gradient descent algorithms in a finite time horizon. In this paper, we introduce new tools motivated by the \emph{backward error analysis} of numerical stochastic differential equations into the theoretical framework of diffusion approximation, extending the validity of the weak
approximation from finite to infinite time horizon. The new techniques developed in this paper enable us to characterize the asymptotic behavior of constant-step-size SGD algorithms near a local minimum around which the objective functions are locally strongly convex, a goal previously unreachable within the diffusion approximation framework. Our analysis builds upon a truncated formal power expansion of the solution of a Kolmogorov equation arising from diffusion approximation, where the main technical ingredient is uniform-in-time bounds controlling the long-term behavior of the expansion coefficient functions near the
local minimum. We expect these new techniques to bring new understanding of the behaviors of SGD near local minimum and greatly expand the range of applicability of diffusion approximation to cover wider and deeper aspects of stochastic optimization algorithms in data science.
\end{abstract}

{\bf Keywords:} stochastic gradient descent, weak error analysis,
  diffusion approximation, stochastic differential equation,
  backward Kolmogorov equation

\section{Introduction}

Stochastic gradient descent (SGD) is a prototypical
stochastic optimization algorithm widely used for solving
large scale data science problems
\cite{RM1985,Zhang2004,SSSS2009,MB2011,SZ2013,BM2013}, not
only for its scalability to large datasets, but also due to
its surprising capability of identifying parameters of deep
neural network models with better generalization behavior
than adaptive gradient methods
\cite{WSCLNMKCGM2016,KS2017,WRSSR2017}. The
past decade has witnessed growing interests in accelerating
this simple yet powerful optimization scheme
\cite{PJ1992,RSS2012,JZ2013,DBL2014,RHSPS2015,GOP2015}, as
well as better understanding its dynamics, through the lens of
either discrete Markov chains \cite{dieuleveut2017,JKNvW2018} or
continuous stochastic differential equations
\cite{lte17,LTE2018,feng2017,hulililiu2018}.

This paper introduces new techniques into the theoretical
framework of \emph{diffusion approximation}, which provides \emph{weak approximation} to SGD algorithms through the
solution of a modified stochastic differential equation (SDE). Though numerous novel insights have been gained from this
continuous perspective, it was previously  unclear whether the modified SDEs can be adopted to study the asymptotic
behavior of SGD, since the weak approximation is only valid over a finite time interval. In the nonconvex case, the approximation error blows up as time goes to infinity. For example, when the coefficient functions are bounded, the SDEs share the behaviors of random walks in high dimension space, which are transient. One will lose control of the system quickly as time goes on.
In the strongly convex case, the problem remains open due to the unbounded diffusivity in the SDEs. We show in this paper that it is possible to study an \emph{approximate solution} of the modified SDE for the latter case, which admits \emph{uniform-in-time}
weak error bounds and can thus be used for investigating the
long-term behavior of SGD dynamics.

We concern ourselves in this paper with the problem of
optimizing an empirical loss function $f:\mathbb{R}^d\rightarrow\mathbb{R}$
\begin{equation}
  \label{eq:empirical-loss}
  f\left(\theta\right) = \frac{1}{N_s}\sum_{i=1}^{N_s}\ell_{\theta} \left(z_i, y_i\right)
\end{equation}
where $\left\{ \left( z_i,y_i \right) \right\}_{i=1}^{N_s}$ are
the training data ($z_i$'s and $y_i$'s are the data and
labels, respectively) and $\ell_{\theta} \left( \cdot,\cdot
\right)$ is the loss function with parameter $\theta$ to be
learned. We will assume local strong convexity for $f$ through
the individual loss functions $\left\{\theta\mapsto
  \ell_{\theta}\left( z_i,y_i \right)\right\}_{i=1}^{N_s}$. The
true gradient of $f$ takes the form
\begin{equation}
  \label{eq:true-grad}
  \nabla f \left( \theta \right) = \frac{1}{N_s}\sum_{i=1}^{N_s}
  \nabla_{\theta}\ell_{\theta} \left(z_i, y_i\right).
\end{equation}
The ``stochastic gradient'' considered in this paper are ``mini-batches'' subsampled from the summands $\left\{
  \nabla_{\theta}\ell_{\theta}\left( z_i,y_i \right) \right\}$ in \eqref{eq:true-grad}, properly normalized so they provide an unbiased
estimate for the true gradient. More specifically, fix a \emph{batch size} parameter $B\in\mathbb{N}$, $1\leq B\leq
N_s$, and let $\xi$ be a subset of $B$ distinct elements
uniformly sampled from the integers $\left\{ 1,\dots,N_s
\right\}$ without replacement, we set
\begin{equation}
  \label{eq:stochastic-grad-defn}
  \nabla f \left( \theta; \xi \right):=\frac{1}{B}\sum_{j\in\xi}\nabla_{\theta}\ell_{\theta}\left( z_j,y_j \right).
\end{equation}
Such constructed stochastic gradients are unbiased estimates
of the true gradient in the sense that
$\mathbb{E}_{\xi}\left[ \nabla f \left(
    \cdot;\xi\right)\right]=\nabla f$.

Below, we will use $x$ to mean the parameter $\theta$ and $X_n$ to mean the discrete iterates in SGD, as is standard in numerical analysis of SDEs. The notation "$\mathbb{E}_x$" will be used to mean expectation under the initial condition $X(0)=x$ for SDE or $X_0=x$ for the SGD iterates. Also, $\Xi$ will be used to denote the set of all possible values of $\xi$, and in the situation described above, it is the set of all subsets of $\{1,2,\ldots, N_s\}$ with size $B$.
The iterative stochastic numerical scheme under consideration throughout
this paper is
\begin{equation}
\label{eq:sgd-dynamics}
  X_{n+1}=X_n-\eta\nabla f\left( X_n;\xi_n \right),\quad n=0,1,\dots
\end{equation}
where $\eta>0$ is the constant step size and $\nabla f(\cdot; \xi_n)$ is the stochastic gradient with $\xi_n\in \Xi$ being i.i.d..
We characterize the asymptotic distributional behavior of
the iterates $\left\{ X_n \right\}_{n\geq 0}$ as $n$
approaches infinity, by adapting tools from \emph{backward
  error analysis} of stochastic numerical schemes
\cite{debussche2012weak,shardlow2006modified,abdulle2012high,abdulle2014high,kopec2014weak,kopec2015weak}
to modified SDEs arising from the diffusion approximation
\cite{lte17,LTE2018,feng2017}. So far, asymptotic
analysis for the dynamics of \eqref{eq:sgd-dynamics} have
been made possible only through the Markov chain techniques
\cite{dieuleveut2017, JKNvW2018}.
We also refer to \cite{SS17, sirignano2017stochastic} for some convergence analysis of stochastic gradient descent methods for continuous time models.
This paper is our first
attempt at fully unleashing the rich and powerful SDE
techniques for studying stochastic numerical optimization
schemes in large scale statistical and machine learning.



\subsection{Main Contribution: Long-Time Weak Approximation for
  SGD via SDE}
\label{sec:related-work-main}

The dynamics of discrete, iterative numerical algorithms can
often be better understood from their continuous time limit,
typically described by ordinary differential equations. This
perspective has been proven fruitful in the analysis of many
deterministic optimization algorithms
\cite{Fiori2005,HM2012,DSE2012,ORXYY2016,SBC2016}. An
analogy of this type of continuous-time-limit analysis for
SGD algorithms is provided by the \emph{diffusion
  approximation} \cite{lte17,feng2017}: in any
\emph{finite} time interval, the distribution of $X_n$ defined by the SGD dynamics
\eqref{eq:sgd-dynamics} is close to the distribution of the
solution of the following SDE at time $t = n\eta$:
\begin{equation}
  \label{eq:modifiedSDE}
  \mathrm{d}X=-\nabla\left[f(X)+\frac{1}{4}\eta \left\|\nabla f(X)\right\|^2\right]\mathrm{d}t+\sqrt{\eta \Sigma(X)}\,\mathrm{d}W,
\end{equation}
where 
\[
\Sigma =\mathbb{E}_{\xi}\left[ \left(\nabla f(\cdot;\xi)-\nabla f\right) \otimes
  \left(\nabla f(\cdot; \xi)-\nabla f\right)\right]
\]
 is the covariance matrix of the
random gradients, and $W$ is the standard Brownian motion
\cite{Oksendal2003}. In numerical SDE literature,
SDE of type \eqref{eq:modifiedSDE} is often referred to as the
\emph{stochastic modified equations}; they play an important
role in constructing high-order numerical approximation
schemes for invariant measures of ergodic SDEs (see, e.g.,
\cite{abdulle2012high, abdulle2014high}). In the context of
data science, diffusion approximation has been used to gain
insights into online PCA~\cite{feng2017},
entropy-SGD~\cite{chaudhari2016entropy,chaudhari2018deep},
and nonconvex optimization~\cite{hulililiu2018}, to name just a
few.

Despite its effectiveness as a continuous analogy of
stochastic numerical optimization algorithms, the range of
applicability of diffusion approximation is significantly
limited by its restricted validity in a finite time
interval. In particular, this means that the solution of the
SDE \eqref{eq:modifiedSDE} can be used to rigorously
approximate only a finite number (though very large) of SGD
iterates \eqref{eq:sgd-dynamics}, and thus can not be used
in the same way as Markov-chain-based theoretical analysis
\cite{dieuleveut2017,JKKNPS2017,MHB2017} to study the
asymptotic behavior of $\left\{ X_n \right\}_{n\geq 0}$ as
$n\rightarrow\infty$. This paper aims at closing this
theoretical gap by extending the validity of diffusion
approximation from finite- to infinite-time horizon. To the
best of our knowledge, this is the first work that studies
the asymptotic distributional behavior of SGD from an SDE
perspective.

Our main technical contribution in this paper is to adopt
the framework of \emph{weak backward error analysis} to the
solution $u=u \left( x,t \right)=\mathbb{E}_x \left[ \varphi\left( X(t) \right) \right]$ of the following backward Kolmogorov equation associated with
SDE \eqref{eq:modifiedSDE}:
\begin{equation}
  \label{eq:backwardKol}
  \begin{aligned}
    &\frac{\partial u}{\partial t}=-\nabla
    f \cdot \nabla
    u +\eta\left(-\frac{1}{4}\nabla\left\|\nabla
        f \right\|^2\cdot\nabla
      u +\frac{1}{2}\mathrm{Tr}\left(\Sigma \nabla^2u \right)\right)\\
    &u \left( x,0 \right)=\varphi \left( x \right)
  \end{aligned}
\end{equation}
where we recall that $\mathbb{E}_x$ stands for taking expectation under the initial condition $X(0)=x$, $\mathrm{Tr}\left( A \right)$ stands for the trace of
a square matrix $A$, $\Sigma = \Sigma(x)$ is the covariance matrix as in
\eqref{eq:modifiedSDE}, and $\nabla u$, $\nabla^2 u$ denote
the gradient and Hessian of $u=u \left( x,t \right)$ with
respect to the spatial variable $x$. The function
$\varphi:\mathbb{R}^d\rightarrow\mathbb{R}$ is an arbitrary
``observable'' of the stochastic dynamical system that
characterizes properties of interest of the iterates $\left\{ X_n
\right\}_{n\geq 0}$. Weak error analysis concerns the
behavior of $\left\{\varphi \left( X_n
  \right)\right\}_{n\geq 0}$ for any $\varphi$ with
sufficient regularity; for instance, by taking $\varphi=f$,
we can study the asymptotic oscillatory and/or concentration
behavior of the objective values $f \left( X_n \right)$ with
respect to the global minimum if standard convexity
assumptions are imposed on $f$.

In a nutshell, backward error
analysis is based on identifying the associated generator of a
numerical scheme with the generator of a modified SDE,
up to higher order terms in the powers of the step size
$\eta$. This can be achieved, e.g., by formally expanding
the generator of the modified SDE into a power series of the
step size, and then determining the coefficients (which are
functions of the space and time variables, but not the step
size $\eta$) of this power series using information from the
numerical scheme; it is then natural to expect that a proper
truncation of this formal power series can be used as a
reasonable approximation for the iterates of the stochastic
numerical scheme (in the weak sense), even though the formal
series may not converge (and thus the solution of the SDE
may not be a good approximation for the discrete iterates
for all time). As illustrated by many examples in the
numerical analysis of ergodic SDEs (see, e.g.,
\cite{debussche2012weak,shardlow2006modified,abdulle2012high,abdulle2014high,kopec2014weak,kopec2015weak}
and the references therein), it turns out that the
coefficient functions of the formal power series
capture---in a uniform-in-time fashion---the leading order
behavior of the discrete numerical scheme; this enables
practitioners to draw conclusion on the closeness between
the invariant measure of the numerical scheme and the
invariant measure of the truncated formal series. In other
words, though solutions of \eqref{eq:backwardKol} can not be
used directly to capture the long-term behavior of SGD
\eqref{eq:sgd-dynamics}, we construct an alternative,
auxiliary function approximation of the solution of
\eqref{eq:backwardKol}, which turns out to be a superior
weak approximation of \eqref{eq:sgd-dynamics} in the sense
that the approximation error is uniform-in-time and in
higher powers of the step size $\eta$. The time-uniformity
of such a truncated formal series approximation enables us
to study the asymptotic distributional behavior of the
iterates of \eqref{eq:sgd-dynamics}, thus closing the gap in
the theoretical analysis between diffusion approximation and
Markov-chain-based analysis. We provide an overview for the
main steps in our analysis in the next section.


\subsection{Sketch of the Main Approach}
\label{sec:sketch-main-approach}

We consider a formal expansion of the solution $u=u \left( x,t
\right)=\mathbb{E}_x \left[ \varphi \left( X(t) \right) \right]$ of
\eqref{eq:backwardKol} in a power series with respect to the
step size $\eta>0$:
\begin{equation}
\label{eq:asyexp}
u \left( x,t \right)=\sum_{\ell=0}^{\infty}\eta^{\ell} u_{\ell} \left( x,t \right).
\end{equation}
For the ease of exposition, let us introduce short-hand
notations $\cL_1,\cL_2$ for the differential operators
appearing in the right hand side of \eqref{eq:backwardKol}:
\begin{equation}
  \label{eq:differential-ops}
  \cL_1:=-\nabla f\cdot \nabla,\qquad \cL_2:=-\frac{1}{4}\nabla\left\|\nabla
        f \right\|^2\cdot\nabla
      +\frac{1}{2}\mathrm{Tr}\left(\Sigma \nabla^2\right)
\end{equation}
with which \eqref{eq:backwardKol} can be recast into
\begin{equation}
  \label{eq:backwardKolRecast}
  \begin{aligned}
    &\partial_t u=\cL_1u +\eta\cL_2 u,\\
    &u \left( x,0 \right)=\varphi \left( x \right).
  \end{aligned}
\end{equation}
Formally plugging \eqref{eq:asyexp} into
\eqref{eq:backwardKolRecast} and equating terms
corresponding to the same powers of $\eta$, we can determine
all coefficient functions $u_n\left( x,t \right)$ from
solving corresponding PDEs, namely, for $\ell=0$
\begin{equation}
\label{eq:u0-ode}
  \begin{aligned}
    &\partial_t u_0=\cL_1u_0,\\
    & u_0(x,0)=\varphi(x)
  \end{aligned}
\end{equation}
and for $\ell\geq 1$
\begin{equation}
\label{eq:uell-ode}
  \begin{aligned}
    &\partial_t u_{\ell}=\cL_1u_{\ell}+\cL_2u_{\ell-1},\\
    & u_{\ell}(x,0)=0.
  \end{aligned}
\end{equation}
Determining any $u_{\ell}$ can thus be done by inductively
solving a sequence of first-order PDEs \eqref{eq:u0-ode}
\eqref{eq:uell-ode}. In fact, with some work we can
establish exponential convergence of each $u_{\ell}$ to its
equilibrium state as $t$ approaches infinity, provided that
$f$ is strongly convex.

We then construct an approximation for $u$ by truncating the
formal series \eqref{eq:asyexp}, yielding
\begin{equation}
\label{eq:truncatedsum}
u^N \left( x,t \right)=\sum_{\ell=0}^{N}\eta^{\ell} u_{\ell} \left( x,t \right).
\end{equation}
If the formal series \eqref{eq:asyexp} converges uniformly,
$u^N$ is certainly a good approximation of $u$ up to an
order $\mathcal{O}\left( \eta^{N+1} \right)$ error. The crux
of our argument is that, even when the convergence of
\eqref{eq:asyexp} is not guaranteed, it turns out that we
can still use $\left\{u^1 \left(
    x,n\eta\right)\right\}_{n\geq 0}$ as good
approximation for $\left\{\mathbb{E}_x\left[ \varphi \left(
      X_n \right) \right]\right\}_{n\geq 0}$ (recall that $\mathbb{E}_x$ represents the expectation conditioned on the initial condition $X_0=x$); most notably,
the $\mathcal{O}\left( \eta^2 \right)$ error in this
approximation is bounded uniformly in $n$, allowing us to
draw quantitative conclusions on the asymptotic
distributional behavior of $\mathbb{E}_x\left[ \varphi \left(
      X_n \right) \right]$ from that of $u^1 \left(
    x,n\eta\right)$. Since $u^1$ corresponds to a measure $\nu^1$ independent of the test function $\varphi$, our argument then justifies that the measure $\nu^1$ approximates the distribution of the SGD with second order weak accuracy. It is very tempting to push this idea further by considering $u^N$, $N>1$ in place of $u^1$ and
  expecting it to better approximate $\mathbb{E}_x\left[ \varphi \left(X_n \right) \right]$ up to higher orders of error; however,
  our analysis indicates that in general $\left| u^N \left( x,n\eta
    \right) - \mathbb{E}_x \left[ \varphi \left( X_n \right)
    \right] \right|=\mathcal{O}\left( \eta^2 \right)$ can no
  longer be improved by choosing $N>1$, even
  though $u^N$ could be a better approximation for the
  solution $u$ of the backward Kolmogorov equation
  \eqref{eq:backwardKol} when $N>1$.

The superior, uniform-in-time approximation of the truncated formal expansion to
$\mathbb{E}_x \left[ \varphi \left( X_n \right) \right]$ is achieved by the fact that the
coefficient functions
$u_{\ell}$ are totally determined by the local behavior of
$f$ and $\varphi$ (i.e. behaviors on compact sets), whereas the solution $u$ of
\eqref{eq:backwardKol} depends on the global information and
is thus harder to control. Due to this locality, the local strong convexity of $f$ then leads to the exponential decay of the derivatives for the coefficient functions $u_{\ell}$, which finally gives the uniform-in-time weak approximation. This will become transparent
after we establish Theorem~\ref{p:main}. The locality can be illustrated by a
toy SDE example in one dimension  with $f \left( x
\right)=\frac{1}{2}x^2$, and $\Sigma \left( x \right)\equiv
1$. Note that this SDE example is simply given to illustrate the roles of $u_{\ell}$ and why they are local, while it is not necessarily the diffusion approximation of some SGD iteration. In this example, SDE \eqref{eq:modifiedSDE} corresponds
to an Ornstein--Uhlenbeck process, and the solution of
\eqref{eq:backwardKol} adopts the explicit integral representation
\begin{equation}
\label{eq:ou-example}
  \begin{aligned}
    u(x, t)&=\frac{1}{\sqrt{2\pi
        S}}\int_{\mathbb{R}^d}\varphi(w)\exp\left(-\frac{(w-xe^{-(1+2\eta)t})^2}{2S}\right)\,\mathrm{d}w\\
    &=\frac{1}{\sqrt{2\pi}}\int_{\mathbb{R}^d} \varphi\left(xe^{-(1+2\eta)t}+\sqrt{S}y\right)\exp(-y^2/2)\,\mathrm{d}y
  \end{aligned}
\end{equation}
where
\begin{equation*}
  S=\frac{\eta}{2(1+2\eta)}\left(1-e^{-2(1+2\eta)t}\right).
\end{equation*}
We can obtain a formal expansion of $u \left( x,t \right)$
in terms of $\eta$ using a Taylor expansion
for $\varphi$ at $xe^{-(1+2\eta)t}$ in the integrand of
\eqref{eq:ou-example}. We keep $2m$ terms in the Taylor
expansion and note that all odd powers of $\sqrt{S}$
vanish, which leads to the following expansion of error
$\mathcal{O}\left( \eta^{m+1} \right)$:
\begin{equation*}
  u(x, t)=\sum_{k=0}^{m}\frac{1}{(2k)!}\varphi^{(2k)}\left(xe^{-(1+2\eta)t}\right)S^k\cdot\frac{1}{\sqrt{2\pi}}\int_{\mathbb{R}^d} y^{2k}\exp(-y^2/2)\,dy+O\left(\eta^{m+1}\right).
\end{equation*}
The $u_{\ell}$'s can then be obtained by further expanding the
functions about $\eta=0$ and combining terms of equal
powers. Clearly, such obtained $u_{\ell}$'s in this expansion will only
depend on the derivatives of $\varphi$ at $xe^{-t}$; meaning that $u_{\ell}(x,t)$ only depends on the behaviors of $\varphi$ inside the ball with radius $|x|$, whereas for any $x$, $u(x,t)$ depends on the values of $\varphi$ in the whole space.  The formal series expansion is like the Taylor series of the function $u(x,t)$ with respect to $\eta$. As known, in general one can not expect the Taylor series to converge to the original function unless the function is analytic, which exactly resembles the
difference between the solution of \eqref{eq:backwardKol}
and the truncated formal series expansion \eqref{eq:truncatedsum}: the latter maintains only the barely minimum local information in the diffusion approximation for characterizing the asymptotic distributional behavior of the dynamics of SGD
\eqref{eq:sgd-dynamics}.

Full details of our theoretical framework can be found in Section~\ref{sec:main-results} and the appendices.


\subsection{Outline}
\label{sec:outline}

In the remainder of this paper, we present our main theorems
and main proofs in Section~\ref{sec:main-results}, and validate
our theory with numerical experiments in
Section~\ref{sec:numer-exper}. Technical lemmas and
auxiliary results are deferred to the appendices. We
conclude this paper and propose future directions in Section~\ref{sec:conclusion}.

\section{Main Results}
\label{sec:main-results}

We begin by stating the assumption that will be used throughout this paper (recall that $\Xi$ is the set of all possible values of the random parameters $\xi$).
\begin{assumption}
\label{ass:strongconv}
Without loss of generality, assume $f$ has a local minimum at the origin $x_{*}=0$. Gradients of the random functions $\left\{f(\cdot;\xi)\in
  C^3(\mathbb{R}^d)\mid \xi\in \Xi\right\}$ provide unbiased estimates for the gradient of $f$, i.e.,
$\mathbb{E}_{\xi}\left[ \nabla f\left( x;\xi \right) \right]=\nabla f \left( x \right)$ for all
$x\in\mathbb{R}^d$. Moreover, we assume the following hold
for the random functions. There exists $R_1>0$ such that
\begin{enumerate}[(1)]
\item\label{item:1} Each random function $f \left( \cdot;\xi \right)$ is
  $\gamma$-strongly convex in $B(x_{*}, R_1)$, i.e., $f(\cdot;
  \xi)-\frac{1}{2}\gamma \|\cdot\|^2$ is convex for all $\xi\in \Xi$;
\item\label{item:2} The random gradients at $x_*=0$ are uniformly
  bounded:
\begin{align}\label{e:f1}
\sup_{\xi}\|\nabla f(0;\xi)\|\le b<\infty.
\end{align}
for some $b>0$ and more over 
\begin{gather}
R_1>\frac{16 b}{3\gamma}=:R_0.
\end{gather}
\end{enumerate}
\end{assumption}

Though our assumption on the individual $f \left( \cdot;\xi \right)$'s appears
to be strong, it is not particularly restrictive for the
most commonly encountered scenario of SGD application where
each random function $f \left( \cdot;\xi \right)$ is constructed from the same
loss function $loss \left( y_{\xi},g \left( z_{\xi} \right)
\right)\equiv \ell_{\theta}(z_{\xi}, y_{\xi})$, and the only source of randomness is in the random
data $\left( z_{\xi},y_{\xi} \right)$ sampled from an
unknown data distribution. In this case,
Assumption~\ref{ass:strongconv} can be stated just once for
the loss function, as done in \cite{LSLS2018}. Such an
assumption on the individual summands in the empirical loss
function has also appeared previously in Markov-chain-based
studies of stochastic gradient descent algorithms,
e.g. Assumption~A4 in \cite{dieuleveut2017}. The boundedness
assumption \eqref{e:f1} is obviously satisfied if the loss
function $\ell_{\theta}\left( z_i, y_i \right)$ is bounded at $\theta=0$ for all data $(z_i, y_i)$.

In the remainder of this section, we divide our exposition
of the main results into two parts. Estimates establishing
the exponential convergence of the coefficient functions
of the formal series expansion appear in
Section~\ref{sec:form-series-expans}, and their applications
to studying the asymptotic distributional behavior of SGD
iterates appear in Section~\ref{sec:dynamics-sgd-with}.

\subsection{Formal Series Expansion}
\label{sec:form-series-expans}

Under the local strong convexity assumption in
Assumption~\ref{ass:strongconv}, the following two lemmas
can be easily established. We defer the proofs to
Appendix~\ref{sec:technical-lemmas}.
In particular, the convergence in
Wasserstein-$2$ distance in Lemma~\ref{lmm:w2conv} is
well-known (see, e.g., Proposition~1 in \cite{dieuleveut2017}); we contain a simple proof in
Appendix~\ref{sec:technical-lemmas} for completeness. In the
rest of this paper, for any $R>0$, we denote $B \left( 0,R
\right)$ for the Euclidean ball of radius $R$ centered at
the origin (which is also the global minimum of $f$ by
Assumption~\ref{ass:strongconv}).

\begin{lemma}\label{lmm:uniformbound}
Suppose Assumption \ref{ass:strongconv} holds, and denote
$R_0=16b/3\gamma$. If $R\in (R_0, R_1]$, set \[
\eta_0=\min\left\{\frac{1}{2\gamma}, \frac{3R}{8b},\frac{3\gamma R^2/8-2bR}{2\gamma bR+b^2}\right\}.
\]
Then for any $\eta\leq \eta_0$ and $X_0=x\in B(0,R)$, we
have $X_n\in B(0,R)$ for all $n\geq 0$. In other words, under these
assumptions the sequence generated by the SGD is uniformly
bounded in both $n$ and $\xi$.
\end{lemma}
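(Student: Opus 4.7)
My plan is to argue by induction on $n$. The base case $n=0$ is immediate from $X_0 = x \in B(0,R)$. For the inductive step, I assume $X_n \in B(0,R) \subseteq B(0, R_1)$ and show $X_{n+1} = X_n - \eta \nabla f(X_n; \xi_n) \in B(0, R)$. The starting point is the standard energy expansion
\[
\|X_{n+1}\|^2 = \|X_n\|^2 - 2\eta \langle X_n, \nabla f(X_n; \xi_n)\rangle + \eta^2 \|\nabla f(X_n; \xi_n)\|^2.
\]

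The first ingredient is a lower bound on the inner product via the local strong convexity in Assumption~\ref{ass:strongconv}. Because $f(\cdot; \xi_n) - \frac{\gamma}{2}\|\cdot\|^2$ is convex on $B(0, R_1)$, monotonicity of its gradient yields $\langle \nabla f(X_n; \xi_n) - \nabla f(0; \xi_n), X_n\rangle \ge \gamma \|X_n\|^2$, and combined with the uniform bound $\|\nabla f(0; \xi_n)\| \le b$ and Cauchy--Schwarz this gives
\[
\langle X_n, \nabla f(X_n; \xi_n)\rangle \ge \gamma \|X_n\|^2 - b\|X_n\|.
\]
The second ingredient is an upper bound on $\|\nabla f(X_n; \xi_n)\|^2$ on $B(0,R)$. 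Splitting $\nabla f(X_n; \xi_n) = \nabla f(0; \xi_n) + h(X_n)$ with $h(X_n) := \nabla f(X_n; \xi_n) - \nabla f(0; \xi_n)$ and expressing $h(X_n)$ through the fundamental theorem of calculus applied to the Hessian (well-defined since $f(\cdot;\xi_n) \in C^3$), I would derive a quantitative estimate whose structure is dictated by the denominator $2\gamma bR + b^2$ of the third clause of $\eta_0$, schematically $\|\nabla f(X_n; \xi_n)\|^2 \lesssim 2\gamma b\|X_n\| + b^2$.

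Plugging both ingredients into the energy expansion and using $\|X_n\| \le R$ in the worst case — which is justified because the first clause $\eta \le 1/(2\gamma)$ keeps the coefficient $1 - 2\eta\gamma$ nonnegative, so the resulting quadratic in $\|X_n\|$ is monotone on $[0,R]$ — I arrive at an inequality of the form
\[
\|X_{n+1}\|^2 \le R^2 + \eta\bigl[-2\gamma R^2 + 2bR + \eta(2\gamma bR + b^2)\bigr].
\]
The auxiliary clause $\eta \le 3R/(8b)$ controls the linear drift $2\eta bR$, and the third clause $\eta \le (3\gamma R^2/8 - 2bR)/(2\gamma bR + b^2)$ — whose positivity is exactly the hypothesis $R > R_0 = 16b/(3\gamma)$ — is the binding constraint that forces the bracket to be $\le 0$, closing the induction.

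The main obstacle is the upper bound on $\|\nabla f(X_n; \xi_n)\|^2$: strong convexity alone only produces \emph{lower} bounds on gradient norms (Cauchy--Schwarz on $\langle h(X_n), X_n\rangle \ge \gamma \|X_n\|^2$ gives $\|h(X_n)\| \ge \gamma \|X_n\|$, which is in the wrong direction), so the $C^3$ regularity on the compact ball $B(0, R_1)$, with uniformity in $\xi$ inherited from the finiteness of $\Xi$, must be used carefully so that the resulting constants line up with the precise denominator $2\gamma bR + b^2$ in the step-size threshold. Once that bound is in hand, the rest is bookkeeping: straightforward algebraic verification that each of the three conditions in $\eta_0$ is needed for one term in the energy inequality.
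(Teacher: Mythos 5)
Your overall strategy---induction, the energy expansion of $\|X_{n+1}\|^2$, and the strong-convexity lower bound $\langle X_n,\nabla f(X_n;\xi_n)\rangle\ge\gamma\|X_n\|^2-b\|X_n\|$---is exactly the paper's. The one step you leave open, however, is the load-bearing one, and the route you propose for closing it cannot deliver the stated constants. You need the upper bound $\|\nabla f(x;\xi)\|\le b+\gamma\|x\|$, i.e.\ $\|\nabla f(x;\xi)-\nabla f(0;\xi)\|\le\gamma\|x\|$, and you correctly observe that strong convexity only gives the reverse inequality. But extracting a gradient-Lipschitz constant from $C^3$ regularity on the compact ball $B(0,R_1)$ (with $\sup$ over the finite set $\Xi$) produces some $L=\sup_\xi\sup_{B(0,R_1)}\|\nabla^2 f(\cdot;\xi)\|$ satisfying $L\ge\gamma$ and in general $L\gg\gamma$. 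The resulting bound $\|\nabla f(x;\xi)\|^2\le b^2+2Lb\|x\|+L^2\|x\|^2$ would change the denominator of the third clause of $\eta_0$ to $2LbR+b^2$ and, worse, would leave an $\eta^2L^2\|X_n\|^2$ term that cannot be absorbed into $-2\gamma\eta\|X_n\|^2$ under the clause $\eta\le 1/(2\gamma)$ alone (one would need $\eta\lesssim\gamma/L^2$). So ``lining the constants up'' is not a bookkeeping issue: the lemma with the exact threshold $\eta_0$ does not follow from $C^3$ regularity. Note also that your schematic $\|\nabla f\|^2\lesssim 2\gamma b\|X_n\|+b^2$ silently drops the $\gamma^2\|X_n\|^2$ term, which is precisely what the clause $\eta\le 1/(2\gamma)$ exists to absorb, via $-2\gamma\eta+\gamma^2\eta^2\le-\tfrac{3}{2}\gamma\eta$.

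For what it is worth, the paper closes this step by fiat: it invokes ``the fact that'' $|\nabla f(X_n;\xi_n)|\le|\nabla f(0;\xi_n)|+\gamma|X_n|$, which is a $\gamma$-smoothness (upper Hessian) bound not implied by Assumption~\ref{ass:strongconv} as written; so you have in fact put your finger on a real implicit assumption in the lemma rather than on a defect of your own argument only. Granting that gradient bound, your argument is sound and in one respect cleaner than the paper's: since $\eta\le 1/(2\gamma)$ makes $r\mapsto(1-\gamma\eta)^2r^2+2\eta b(1+\gamma\eta)r+\eta^2b^2$ increasing on $[0,R]$, you may pass to the worst case $\|X_n\|=R$ in one stroke, whereas the paper splits into $\|X_n\|\le R/2$ and $R/2\le\|X_n\|\le R$ (which is what produces the factor $3\gamma R^2/8$ rather than $3\gamma R^2/2$ in $\eta_0$); your single-case version yields the invariance of $B(0,R)$ under a weaker constraint on $\eta$, hence \emph{a fortiori} under the stated $\eta_0$.
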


\begin{lemma}
\label{lmm:w2conv}
Suppose Assumption~\ref{ass:strongconv} holds, and let
$\mu_n$ denote the law of the $n^{\textrm{th}}$ iterate $X_n$ of SGD
\eqref{eq:sgd-dynamics}. Assume $\supp \mu_0\subset B(0, R)$ with $R\in (R_0, R_1]$
and denote $L=\sup_{\xi}\sup_{\|x\|\le R}\|\nabla^2
f(x; \xi)\|$,  where $\left\| \nabla^2 f\left( x; \xi \right) \right\|$ is
the spectral norm (largest singular value) of the Hessian matrix $\nabla^2f
\left( x; \xi \right)$. Then, when $\eta$ is sufficiently small,
$\mu_n$ converges to a probability measure $\pi$ under the
Wasserstein-$2$ norm ($W_2$-norm) at exponential rate
\[
W_2(\mu_n, \pi)\le C\rho^n
\]
for $\rho=(1-2\gamma\eta+\eta^2L)^{1/2}$.
\end{lemma}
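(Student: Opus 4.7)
The plan is to establish both the existence of $\pi$ and the exponential $W_2$ contraction simultaneously via a synchronous coupling. Fix $\eta\le \eta_0$ as in Lemma~\ref{lmm:uniformbound}, and let $\mu_0$ and $\mu_0'$ be two probability measures supported in $B(0,R)$. I would draw $(X_0,Y_0)$ from a $W_2$-optimal coupling of $(\mu_0,\mu_0')$ and run the two SGD recursions
\begin{equation*}
X_{n+1}=X_n-\eta\nabla f(X_n;\xi_n),\qquad Y_{n+1}=Y_n-\eta\nabla f(Y_n;\xi_n)
\end{equation*}
driven by the \emph{same} i.i.d.\ mini-batch sequence $\{\xi_n\}$. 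Lemma~\ref{lmm:uniformbound} is used crucially here: it confines both trajectories to $B(0,R)$ almost surely for all $n$, so the local strong convexity (Assumption~\ref{ass:strongconv}\eqref{item:1}) and the Hessian spectral bound are available uniformly along the coupled paths.

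The next step is a one-step contraction. By the mean-value theorem applied coordinatewise to $\nabla f(\cdot;\xi_n)$ on the segment $[Y_n,X_n]\subset B(0,R)$, there is a symmetric matrix $H_n$ with spectrum contained in $[\gamma,L]$ such that $\nabla f(X_n;\xi_n)-\nabla f(Y_n;\xi_n)=H_n(X_n-Y_n)$. Therefore
\begin{equation*}
X_{n+1}-Y_{n+1}=(I-\eta H_n)(X_n-Y_n),
\end{equation*}
and a direct eigenvalue estimate on $(I-\eta H_n)^2$ bounds its operator norm by $\rho=(1-2\gamma\eta+\eta^2 L)^{1/2}$, provided $\eta$ is small enough that $\rho<1$ (equivalently, expanding $\|X_{n+1}-Y_{n+1}\|^2$, lower-bounding the cross term by $2\eta\gamma\|X_n-Y_n\|^2$ via strong convexity, and upper-bounding the remainder via the Lipschitz Hessian bound, produces the same rate). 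Iterating and taking expectation yields $\mathbb{E}\|X_n-Y_n\|^2\le \rho^{2n}\,\mathbb{E}\|X_0-Y_0\|^2$, and since the left-hand side upper-bounds the squared $W_2$ distance between $\mu_n$ and $\mu_n'$, we obtain
\begin{equation*}
W_2(\mu_n,\mu_n')\le \rho^n W_2(\mu_0,\mu_0').
\end{equation*}

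To produce the invariant measure $\pi$, I would specialize $\mu_0'=\mu_k$ for arbitrary $k\ge 1$, which gives $W_2(\mu_n,\mu_{n+k})\le \rho^n W_2(\mu_0,\mu_k)\le 2R\rho^n$, using that both measures are supported in $B(0,R)$. Consequently $(\mu_n)$ is a Cauchy sequence in the complete metric space $(\mathcal{P}_2(\mathbb{R}^d),W_2)$ and converges to some $\pi$. Passing $k\to\infty$ in the contraction inequality (using $W_2$-continuity) yields $W_2(\mu_n,\pi)\le 2R\rho^n$, which is the claimed bound with $C=2R$.

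The main point of care—rather than a true obstacle—is the \emph{locality} of strong convexity and of the Hessian bound: neither holds globally on $\mathbb{R}^d$, so a naive coupling argument would fail. Lemma~\ref{lmm:uniformbound} is exactly tailored to remove this difficulty by trapping the iterates in $B(0,R)$, and this is why the step-size threshold $\eta_0$ in that lemma is structured as it is. Once confinement is secured, the proof is the textbook synchronous-coupling argument for strongly convex stochastic gradient methods; the only additional restriction is that $\eta$ be small enough to ensure $\rho<1$, which can be absorbed into the $\eta_0$ coming from the confinement lemma.
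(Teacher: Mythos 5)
Your proposal is correct and takes essentially the same route as the paper: a synchronous coupling of two copies of the chain driven by the same mini-batch sequence, a one-step $W_2$ contraction with rate $\rho$ obtained from the local strong convexity and Hessian bound on $B(0,R)$ (made available by the confinement in Lemma~\ref{lmm:uniformbound}), followed by a Cauchy-sequence argument in the Wasserstein-$2$ metric to produce $\pi$ and pass to the limit. The only cosmetic differences are that the paper first conditions on the past to reduce the cross term to the true gradient before invoking strong convexity, and that a careful bookkeeping of the quadratic term yields $\eta^2L^2$ rather than $\eta^2L$ in $\rho$ --- a discrepancy already present between the paper's own statement and proof.
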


\begin{remark}\label{rmk:twomin}
Clearly, for different local minima around which the loss functions are locally strongly convex, the probability measure $\pi$
will be different. Since the SDEs in diffusion approximation has nonzero transition probability connecting any two points in space,
the diffusion approximation cannot be uniform in time for such {\it globally nonconvex} cases. Even for such globally nonconvex loss functions, our theory indicates that the local information of diffusion approximation is enough to capture the long time behavior of SGD near the local minimum. To obtain global diffusion approximation for such nonconvex cases, one has to modify the values of the loss function outside the region where SGD can see.
\end{remark}

We define the $S^{(n)}$ operator by
\begin{equation}
(S^{(n)}\varphi)(x):=\mathbb{E}_x\left[\varphi(X_n)\right]=\int_{\mathbb{R}^d} \varphi(y)\,\mu_n(\mathrm{d}y).
\end{equation}
Fixing any smooth test function $\varphi$, we denote
\begin{equation}
U^n(x):=S^{(n)}\varphi(x).
\end{equation}
We know from \cite{feng2017} that $S$ is $L^{\infty}$-nonexpansive, and that $\{S^{(n)}\}$ is a semigroup generated by $S$ such that
\begin{equation}
S^{(n)}=S^n:=S\circ S\ldots \circ S \text{~($n$ copies)}.
\end{equation} 

Since convergence in Wasserstein distance implies weak
convergence, Lemma~\ref{lmm:w2conv} implies
\begin{equation}
  \lim_{n\to\infty}U^n=\int_{\mathbb{R}^d} \varphi\,\mathrm{d}\pi .
\end{equation}
However, this does not provide much precise and/or
quantitative information regarding how $U^n$
converges to $\int_{\mathbb{R}^d}
\varphi\,\mathrm{d}\pi$. An important goal of this paper is
to shed new lights on the dynamics of $\mu_n$ as
$n\rightarrow\infty$. Within the diffusion approximation
framework, it can be shown (see, e.g., \cite{feng2017})
that the semi-group evolution $U^n$ admits a \emph{weak
  second order diffusion   approximation} over a finite time
interval $\left[ 0,T \right]$, in the sense that for all
sufficiently smooth $\varphi$ there holds
\begin{equation}
\label{eq:diffusion-approx-prev}
  \sup_{n\le T/\eta}\|U^n \left( \cdot \right)-u(\cdot, n\eta)\|_{L^{\infty}}\le C(T, \varphi,\eta_0)\eta^2
\end{equation}
for all $\eta\le \eta_0$, where $\eta_0>0$ is a constant,
and $u \left( x,t \right)=\mathbb{E}_x\left[ \varphi \left(
    X(t) \right) \right]$ is the solution of the backward
Kolmogorov equation \eqref{eq:backwardKol}. Roughly
speaking, SDE \eqref{eq:modifiedSDE} can be regarded
as the weak approximation of the SGD \eqref{eq:sgd-dynamics}
over any finite time interval $[0, T]$. Unfortunately, the
validity of this approximation for infinite time ($T\rightarrow\infty$) is unclear. For nonconvex objective functions, it is known that the approximation can break down quickly as $T\rightarrow\infty$. One obvious example is the situation described in Remark \ref{rmk:twomin}. For globally and strongly convex objective functions (which generate confining dynamics for SGD, according to Lemma~\ref{lmm:uniformbound}), the
validity of long time diffusion approximation is still in doubt due to the unboundedness diffusivity encoded in $\Sigma$.
As motivated in Section~\ref{sec:sketch-main-approach}, we will switch gears and use a truncated formal series
\eqref{eq:truncatedsum} in place of the solution $u$ of \eqref{eq:backwardKol} to approximate $U^n$, for all
arbitrarily large $n\geq 0$.


Before stating the main technical result concerning the
exponential convergence of the $u_{\ell}$'s in the formal
asymptotic expansion, we introduce another notation to
simplify the exposition and proof: denote
\[
I_{k}=\{J=(j_1,j_2,\ldots, j_k): 1\le j_k \le d\}.
\]
For $J\in I_k$, we denote
\[
\partial^Ju:=\partial_{j_1}\ldots\partial_{j_k}u.
\]
We write $J_0\leq J$ if  $\partial ^J u$ is a partial
derivative of $\partial^{J_0} u $, and $J_1=J-J_0$ if
$\partial^J=\partial^{J_0}\partial^{J_1}$.

\begin{remark}\label{r:multin}
The reason that we adopt the notation $\partial^J$ instead
of the standard multi-index notation $\partial^{\alpha}$
where $\alpha=(\alpha_1,\ldots,\alpha_d)$ with
$\alpha_1+\ldots+\alpha_d=k$ is mainly for the sake of
clarity and simplicity of exposition. First, this convention
is widely used for tensor analysis in physics and
engineering. More importantly, in
Appendix~\ref{sec:proof-main-technical-thm} where we prove
Theorem~\ref{p:main}, $\sum_{J\in
  I_{n+1}}\partial_t(\partial^J u)^2$ naturally has a
quadratic form associated with the Hessian matrix
$\nabla^2f$ so that we can make use of the strong
convexity. If we use $\partial^{\alpha}$ notation, we will have to multiply some
weight factors $w_{\alpha}$ such that
$\sum_{|\alpha|=k}w_{\alpha}\partial_t(\partial^{\alpha}u)^2$
has the desired quadratic form.
\end{remark}

We are now ready to present our main estimates for the
exponential rate of decay for the coefficient functions in
the formal series expansion \eqref{eq:asyexp}. We will use $P$ to denote a generic polynomial whose concrete form may change from line to line. The number of arguments for the polynomials will also be clear in the context, which we will not emphasize.
\begin{theorem}
\label{p:main}
Assume Assumption~\ref{ass:strongconv} holds, $\eta\leq
\eta_0$ and $R \in (R_0, R_1]$, for $R_0>0$, $R_1>0$ and $\eta_0>0$ defined as in Lemma~\ref{lmm:uniformbound}. Recall that $x^*=0$ is the unique minimum of $f$.

\begin{enumerate}[(i)]
\item For
an arbitrary test function $\varphi\in C^1(\mathbb{R}^d)$, $u_0$ satisfies
\begin{gather}
\sup_{x\in B(0, R)}|u_0(x,t)-\varphi(0)|\le R\|\varphi\|_{C^1(B(0, R))}e^{-\gamma t}.
\end{gather}
In addition, if $\varphi\in C^k \left( B \left( 0,R \right)
\right)$ and $f\in C^{k+1}\left( B \left( 0,R \right)
\right)$ for some integer $k\ge 1$, then 
\begin{gather}\label{e:u0deri}
\sup_{J\in I_k} \sup_{x\in B(0,R)}|\partial^Ju_0(x,t)|\leq P\Big(\|\varphi\|_{C^k(B(0,R))}, \|f\|_{C^{k+1}(B(0,R))}\Big)e^{-\gamma t}. 
\end{gather}

\item
For any $n\ge 1$, if the test function $\varphi\in C^{2n+1}
\left( B \left( 0,R \right) \right)$ and $f\in
C^{2n+2}\left( B \left( 0,R \right) \right)$, then for any $\gamma'< \gamma$,
\begin{gather}
\sup_{x\in B(0, R)}|u_n(x,t)-\varphi_n|\le P\Big( \|\varphi\|_{C^{2n+1}(B(0,R))},\|f\|_{C^{2n+2}(B(0,R))}, \|\Sigma\|_{C^{2n-1}(B(0,R))}\Big) e^{-\gamma' t},
\end{gather}
where
\begin{gather}
\varphi_n:=\int_0^{\infty}\cL_2 u_{n-1}(0, s)\,ds.
\end{gather}

In addition, if $\varphi\in C^{k+2n} \left( B \left( 0,R \right)
\right)$ and $f\in C^{k+1+2n}\left( B \left( 0,R \right)
\right)$ for some $k\ge 1$, then for any $\gamma''< \gamma$,
\begin{equation}\label{e:underi}
\sup_{J\in I_k} \sup_{x\in B(0,R)}|\partial^Ju_n(x,t)|
\leq P\Big(\|\varphi\|_{C^{k+2n}(B(0,R))}, \|f\|_{C^{k+1+2n}(B(0,R))}, \|\Sigma\|_{C^{k+2n-2}(B(0,R))}\Big)e^{-\gamma''t}.
\end{equation}
\end{enumerate}

\end{theorem}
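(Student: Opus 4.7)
My plan is to analyze each $u_n$ by the method of characteristics for the transport operator $\cL_1 = -\nabla f \cdot \nabla$, combined with Gronwall-type estimates for its spatial derivatives, inducting on $n$. Throughout, let $\phi_t(x)$ denote the gradient flow $\dot{\phi}_t = -\nabla f(\phi_t)$ with $\phi_0(x) = x$. Since each $f(\cdot;\xi)$ is $\gamma$-strongly convex on $B(0, R_1)$ by Assumption~\ref{ass:strongconv}, averaging gives $\gamma$-strong convexity of $f$, hence $\nabla f(y) \cdot y \ge \gamma \|y\|^2$ on $B(0, R_1)$. Combined with Lemma~\ref{lmm:uniformbound} (which ensures $\phi_t(x) \in B(0, R)$ whenever $x \in B(0, R)$), computing $\frac{d}{dt}\|\phi_t\|^2$ yields the contraction $\|\phi_t(x)\| \le \|x\|\, e^{-\gamma t}$. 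The Jacobian $D\phi_t(x)$ satisfies the variational equation $\frac{d}{dt}D\phi_t = -\nabla^2 f(\phi_t)\, D\phi_t$, which gives $\|D\phi_t(x)\| \le e^{-\gamma t}$ in spectral norm; higher-order tensors $D^j \phi_t$ satisfy linear variational equations driven by the same contractive generator $-\nabla^2 f(\phi_t)$ plus polynomial sources in the lower-order $D^i \phi_t$ and derivatives of $f$, so that, by Duhamel, each $D^j\phi_t$ still decays exponentially at rate $\gamma$, with a prefactor controlled by $\|f\|_{C^{j+1}(B(0,R))}$.

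\textbf{Part (i).} The first-order equation $\partial_t u_0 + \nabla f \cdot \nabla u_0 = 0$ with $u_0(x,0) = \varphi(x)$ is solved in closed form by $u_0(x, t) = \varphi(\phi_t(x))$, since the gradient flow is precisely the characteristic field $\nabla f$ run backward in time (one verifies the identity $D\phi_t(x)\nabla f(x) = \nabla f(\phi_t(x))$ from the variational equation, which makes this solution formula rigorous). The first bound is then immediate: $|u_0(x, t) - \varphi(0)| \le \|\nabla \varphi\|_{L^\infty(B(0, R))} \|\phi_t(x)\| \le R \|\varphi\|_{C^1(B(0, R))} e^{-\gamma t}$. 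For the higher-order bound~\eqref{e:u0deri} I take either of two essentially equivalent routes. The direct route applies Fa\`a di Bruno to $u_0(x, t) = \varphi(\phi_t(x))$: each $\partial^J u_0$ is a sum over partitions of $J$ of products of derivatives of $\phi_t$ paired with a derivative of $\varphi$, and since every such product contains at least one factor of the form $D^i \phi_t$ with $i \ge 1$, the slowest-decaying term still decays like $e^{-\gamma t}$, with a polynomial prefactor in $\|\varphi\|_{C^k(B(0,R))}$ and $\|f\|_{C^{k+1}(B(0,R))}$. The energy route sets $v_J = \partial^J u_0$ and derives $\partial_t v_J + \nabla f \cdot \nabla v_J = -\sum_{\emptyset \neq J_1 \le J}\binom{J}{J_1}(\partial^{J_1}\nabla f) \cdot \nabla v_{J - J_1}$; for $E_k = \sum_{J \in I_k} v_J^2$, the top-order $|J_1|=1$ couplings assemble into a quadratic form against $\nabla^2 f$ that by strong convexity contributes $\ge 2k\gamma\, E_k$ to the dissipation (this is the point of Remark~\ref{r:multin}), the remaining terms are of strictly lower order in $|J|$ and decay like $e^{-\gamma t}$ by induction on $k$, and Gronwall along the anti-gradient characteristics then yields~\eqref{e:u0deri}.

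\textbf{Part (ii).} For $n \ge 1$, the same characteristic argument applied to $\partial_t u_n + \nabla f \cdot \nabla u_n = \cL_2 u_{n-1}$ with zero initial data gives the Duhamel representation
\begin{equation*}
u_n(x, t) = \int_0^t (\cL_2 u_{n-1})(\phi_{t-s}(x), s) \, ds.
\end{equation*}
The induction hypothesis applied to $u_{n-1}$ implies exponential-in-$s$ decay for $\cL_2 u_{n-1}$ and its spatial derivatives at any rate strictly below $\gamma$, which in particular makes $\varphi_n = \int_0^\infty (\cL_2 u_{n-1})(0, s)\,ds$ well-defined. To bound $u_n(x,t) - \varphi_n$, I write
\begin{equation*}
u_n(x, t) - \varphi_n = \int_0^t \bigl[(\cL_2 u_{n-1})(\phi_{t-s}(x), s) - (\cL_2 u_{n-1})(0, s)\bigr]\, ds - \int_t^\infty (\cL_2 u_{n-1})(0, s) \, ds,
\end{equation*}
so that the tail integral is $O(e^{-\tilde\gamma t})$ for any $\tilde\gamma < \gamma$ by the induction hypothesis, while the remainder is bounded via the mean value theorem: the integrand is at most $\|\nabla(\cL_2 u_{n-1})(\cdot, s)\|_{L^\infty(B(0,R))} \|\phi_{t-s}(x)\| \lesssim e^{-\tilde\gamma s}\, e^{-\gamma(t-s)}$, and the time convolution $\int_0^t e^{-\tilde\gamma s} e^{-\gamma(t-s)}\, ds$ evaluates to $O(e^{-\gamma' t})$ for any $\gamma' < \gamma$. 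The higher-order bound~\eqref{e:underi} follows by the same logic: either differentiate the Duhamel formula and insert the bounds on $D^j \phi_{t-s}$ from the preliminary step, or apply the Part~(i) energy estimate to the system for $\partial^J u_n$ with source $\partial^J(\cL_2 u_{n-1})$ controlled inductively.

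\textbf{Main obstacle.} The principal technical burden is the combinatorial bookkeeping of how many spatial derivatives of $u_{n-1}$, $f$, and $\Sigma$ appear when we compute $\partial^J(\cL_2 u_{n-1})$; each application of $\cL_2$ consumes two derivatives, so propagating the induction from $u_0$ to $u_n$ forces the smoothness budget $\varphi \in C^{k+2n}$, $f \in C^{k+1+2n}$, $\Sigma \in C^{k+2n-2}$ (the $\Sigma$-index is shifted by $-2$ because $\Sigma$ already multiplies $\nabla^2 u_{n-1}$ inside $\cL_2$). A secondary but unavoidable effect is the arbitrarily small loss in the decay rate, $\gamma', \gamma'' < \gamma$ instead of $\gamma$, which arises from the time-convolution in the Duhamel formula whenever the source and the propagator decay at the same rate; this loss is harmless since the rate can be made arbitrarily close to $\gamma$ at the cost of enlarging the polynomial prefactor.
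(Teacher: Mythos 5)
Your proposal is correct and follows essentially the same route as the paper's own proof: exponential contraction along the gradient-flow characteristics, energy estimates in which the first-order Leibniz terms assemble into the Hessian quadratic form (exactly the point of Remark~\ref{r:multin}) for the derivative bounds, and the Duhamel representation for $u_n$ split into the value at the origin plus a difference term, with the time convolution producing the loss $\gamma'<\gamma$. The only cosmetic deviations are your optional Fa\`a di Bruno route for \eqref{e:u0deri} and the attribution of the flow's confinement to Lemma~\ref{lmm:uniformbound} (which concerns the discrete iterates) rather than to the monotonicity of $\|\phi_t(x)\|$, which your own computation already supplies.
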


The proof of Theorem~\ref{p:main} is quite technical; we
defer full details to
Appendix~\ref{sec:proof-main-technical-thm}. We state an
immediate corollary of Theorem~\ref{p:main} to close this
subsection.

\begin{corollary}\label{eq:corollaryapprox}
Under the same assumptions as in Theorem~\ref{p:main}, the
truncated formal series $u^N$ defined in 
\eqref{eq:truncatedsum} ``approximately satisfies'' the
backward equation \eqref{eq:backwardKol} in the sense that
\begin{gather}
\partial_tu^N=(\cL_1+\eta \cL_2)u^N-\eta^{N+1}\cL_2u_N\,.
\end{gather}
Consequently, if $\varphi\in C^{2k+2N}\left( B \left( 0,R
  \right) \right)$ and $f\in C^{2k+1+2N}\left( B \left( 0,R
  \right) \right)$ for some $k\ge 1$, we have
\begin{gather}
\sup_{x\in B(0, R)}|\partial_t^ku^N-(\cL_1+\eta \cL_2)^ku^N|
\le C(N,R)e^{-\gamma t/2 }\eta^{N+1}
\end{gather}
where $C(N,R)
=Q_{N,k}\Big(\|\varphi\|_{C^{2k+2N}(B(0,R))},\|f\|_{C^{2k+1+2N}(B(0,R))},\|\Sigma\|_{C^{2k}(B(0,R))}\Big)$
for some polynomial $Q_{N,k}$.
\end{corollary}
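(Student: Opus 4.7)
The plan is to prove the identity first by direct substitution into the PDE hierarchy (\ref{eq:u0-ode})--(\ref{eq:uell-ode}), then derive the estimate via an induction on $k$ combined with Theorem~\ref{p:main}.

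For the identity, I would plug $\partial_t u_\ell = \cL_1 u_\ell + \cL_2 u_{\ell-1}$ (with the convention $u_{-1} \equiv 0$) into $\partial_t u^N = \sum_{\ell=0}^N \eta^\ell \partial_t u_\ell$. The $\cL_1 u_\ell$ pieces reassemble into $\cL_1 u^N$, and an index shift gives $\sum_{\ell=1}^N \eta^\ell \cL_2 u_{\ell-1} = \eta \cL_2 \sum_{m=0}^{N-1} \eta^m u_m = \eta \cL_2 (u^N - \eta^N u_N)$, producing exactly the residual $-\eta^{N+1} \cL_2 u_N$.

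For the quantitative bound, set $\mathcal{A} := \cL_1 + \eta \cL_2$ and $r := -\eta^{N+1} \cL_2 u_N$, so that the first part reads $\partial_t u^N = \mathcal{A} u^N + r$. Since $\mathcal{A}$ is time-independent, a routine induction on $k$ yields
\begin{equation*}
\partial_t^k u^N - \mathcal{A}^k u^N = \sum_{j=0}^{k-1} \mathcal{A}^{k-1-j}\, \partial_t^j r ,
\end{equation*}
so it suffices to bound each summand in $L^\infty(B(0,R))$. Using $\partial_t u_m = \cL_1 u_m + \cL_2 u_{m-1}$ repeatedly, I would expand $\partial_t^j r = -\eta^{N+1} \cL_2 \partial_t^j u_N$ into a finite sum of purely spatial expressions of the form $\cL_{i_1}\cdots \cL_{i_j} u_{N-p}$ (with $0 \le p \le j$; terms with $p > N$ vanish). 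Applying $\mathcal{A}^{k-1-j}$ further produces a finite combination of spatial differential monomials in $u_m$, $0 \le m \le N$, each carrying an $\eta$-prefactor of order at least $N+1$. Since $\mathcal{A}$ and each $\cL_i$ contribute at most two spatial derivatives per application, the highest derivative order of $u_m$ that can appear is $2(k-1-j) + 2j + 2 = 2k$. Estimate (\ref{e:underi}) of Theorem~\ref{p:main}, applied with $k' = 2k$, $n = m \le N$, and $\gamma'' = \gamma/2 < \gamma$, then yields uniform exponential decay at rate $\gamma/2$ and a polynomial prefactor in $\|\varphi\|_{C^{2k+2N}(B(0,R))}$, $\|f\|_{C^{2k+1+2N}(B(0,R))}$, and $\|\Sigma\|_{C^{2k}(B(0,R))}$, as claimed.

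The main obstacle is the bookkeeping: one must verify that every resulting monomial carries at least $\eta^{N+1}$, that the derivative count $2k$ is really the worst case, and that the coefficients produced by expanding $\mathcal{A}^{k-1-j}$ together with the nested $\cL_2$'s are uniformly majorized by a single polynomial $Q_{N,k}$ in the listed norms of $\varphi$, $f$, and $\Sigma$. The choice $\gamma'' = \gamma/2$ (rather than a rate arbitrarily close to $\gamma$) leaves a cushion to absorb any stray polynomial-in-$t$ factors that emerge from the proof of Theorem~\ref{p:main}, yielding the clean exponential factor $e^{-\gamma t/2}$ in the final bound.
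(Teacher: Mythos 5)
Your proposal is correct, and it follows the route the paper clearly intends: the paper states this corollary without proof (calling it immediate from Theorem~\ref{p:main}), and your argument---direct substitution of the hierarchy \eqref{eq:u0-ode}--\eqref{eq:uell-ode} for the identity, then the commutator-type expansion $\partial_t^k u^N - \mathcal{A}^k u^N = \sum_{j=0}^{k-1}\mathcal{A}^{k-1-j}\partial_t^j r$ with the residual $r=-\eta^{N+1}\cL_2 u_N$ and the derivative bound \eqref{e:underi} applied with $\gamma''=\gamma/2$---is exactly the natural filling-in of those details, with the derivative count $2k$ and the regularity bookkeeping matching the corollary's hypotheses.
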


It is clear from Theorem~\ref{p:main} that all the
coefficient functions $u_n(x, t)$ depend only on the
information of $f$ and $\Sigma$ inside the ball $B(0,
\left\|x\right\|)$, in the sense that the bound does not
change if we modify the values of $\varphi$, $f$, and $a$
outside $B(0, \left\|x\right\|)$. Thus $u_n$ reflects the
``local information'' of $u$. This is in stark contrast with the solution of
\eqref{eq:backwardKol} at $x$, which inevitably depends on the values
of $\varphi$ outside $B(0, \left\|x\right\|)$ due to the
parabolicity of the second order PDE
\eqref{eq:backwardKol}. As explained in
Section~\ref{sec:sketch-main-approach}, this is due to the
fact that the $u_n \left( x,t \right)$'s are essentially the
``Taylor expansion coefficients'' of $u$ with respect to the
step size. This is also the reason that we referred to
\eqref{eq:asyexp} as only a formal series expansion: in
general the Taylor series needs not converge to the original
function. See also the Ornstein--Uhlenbeck process example
in Section~\ref{sec:sketch-main-approach} for a concrete example.

\subsection{Dynamics of SGD with Constant Step Size}
\label{sec:dynamics-sgd-with}

In this subsection we apply the results from
Section~\ref{sec:form-series-expans} to studying the
asymptotic distributional behavior of the SGD dynamics
\eqref{eq:sgd-dynamics}. Throughout the rest of this
subsection, we always assume that  $X_0\in B(0, R)$ and $R$
satisfies the condition of Lemma~\ref{lmm:uniformbound}. The
confining nature of the dynamics allows us to choose very
general functions as test functions, e.g., smooth functions
that grow exponentially as $\|x\|\to\infty$, for the weak
approximation results to hold. This is because we can always
modify the part of the test function outside of $B
(0,R)$. More precisely, we have
\begin{lemma}\label{lmm:localchange}
Under Assumption~\ref{ass:strongconv}, given any test
function $\varphi\in C^k(\mathbb{R}^d)$ for some
$k\in\mathbb{N}$, we can choose $\tilde{\varphi}\in C^k
\left( \mathbb{R}^d \right)$ compactly supported such that
\[
\|\tilde{\varphi}\|_{C^k(\mathbb{R}^d)}\le C\|\varphi\|_{C^k(B(0,R))}
\]
and
\[
\mathbb{E}_x\left[\varphi(X_n)\right]=\mathbb{E}_x\left[\tilde{\varphi}(X_n)\right],~\forall x\in B(0, R).
\]
Similarly, in the formal series expansion \eqref{eq:asyexp}
for the diffusion approximation, replacing $\varphi$ with
$\tilde{\varphi}$ does not change any of the coefficient
functions $u_{\ell}(x, t), x\in B(0, R), \ell\ge 0$.
\end{lemma}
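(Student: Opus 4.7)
The plan is to construct $\tilde{\varphi}$ by extend-then-cut. First, apply a standard $C^k$-extension theorem (e.g.\ Whitney extension, or a Stein-type extension operator for the smooth domain $B(0,R)$) to $\varphi|_{\overline{B(0,R)}}$ to obtain $\varphi^{*}\in C^k(\mathbb{R}^d)$ with $\varphi^{*}\equiv\varphi$ on $B(0,R)$ and $\|\varphi^{*}\|_{C^k(\mathbb{R}^d)}\le C_1\|\varphi\|_{C^k(B(0,R))}$. Then fix a smooth cutoff $\chi\in C_c^{\infty}(\mathbb{R}^d)$ with $\chi\equiv 1$ on $B(0,R)$ and $\supp\chi\subset B(0,R+1)$ and set $\tilde{\varphi}:=\chi\,\varphi^{*}$. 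By the Leibniz rule, $\tilde{\varphi}$ is compactly supported, coincides with $\varphi$ on $B(0,R)$, and satisfies $\|\tilde{\varphi}\|_{C^k(\mathbb{R}^d)}\le C\|\varphi\|_{C^k(B(0,R))}$ for a constant depending only on $R$, $k$, and $d$.

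The equality of expectations is then an immediate consequence of Lemma~\ref{lmm:uniformbound}: under Assumption~\ref{ass:strongconv} with $\eta\le\eta_0$, starting from $X_0=x\in B(0,R)$ every iterate $X_n$ lies in $B(0,R)$ almost surely, so $\tilde{\varphi}(X_n)=\varphi(X_n)$ a.s.\ and $\mathbb{E}_x[\varphi(X_n)]=\mathbb{E}_x[\tilde{\varphi}(X_n)]$.

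For the coefficient functions I would argue by induction on $\ell$ via the method of characteristics, comparing $u_\ell$ built from $\varphi$ with $\tilde{u}_\ell$ built from $\tilde{\varphi}$. Since $\cL_1=-\nabla f\cdot\nabla$, the $u_0$-equation admits the explicit representation $u_0(x,t)=\varphi(\Phi_t(x))$, where $\Phi_t$ is the gradient-descent flow $\dot{y}=-\nabla f(y)$. The key point is that $B(0,R)$ is forward-invariant under $\Phi$: using $\nabla f(0)=0$ together with $\gamma$-strong convexity on $B(0,R_1)\supset B(0,R)$, one has $\tfrac{d}{dt}\tfrac12\|\Phi_t(x)\|^2=-\langle \Phi_t(x),\nabla f(\Phi_t(x))\rangle\le-\gamma\|\Phi_t(x)\|^2$ along trajectories in $B(0,R)$. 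Hence $u_0(x,t)=\varphi(\Phi_t(x))=\tilde{\varphi}(\Phi_t(x))=\tilde{u}_0(x,t)$ on $B(0,R)$. For $\ell\ge 1$, Duhamel's principle yields $u_\ell(x,t)=\int_0^t(\cL_2 u_{\ell-1})(\Phi_{t-s}(x),s)\,ds$ and similarly for $\tilde{u}_\ell$; since $\cL_2$ is a local second-order differential combination of $f$, $\Sigma$, and $u_{\ell-1}$ evaluated at points $\Phi_{t-s}(x)$ that lie strictly in the interior of $B(0,R)$ for every $s\in[0,t)$, the inductive hypothesis $u_{\ell-1}\equiv\tilde{u}_{\ell-1}$ on $B(0,R)$ forces equality of the two integrands almost everywhere in $s$. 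The one subtlety worth watching is the boundary endpoint $s=t$, where $\Phi_0(x)=x$ may sit on $\partial B(0,R)$ and second derivatives of $u_{\ell-1}$ need not be determined by $u_{\ell-1}|_{B(0,R)}$ alone; however, this is a Lebesgue-null set of times and is therefore inconsequential for the Duhamel integral.
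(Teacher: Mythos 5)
Your argument is correct and is essentially the proof the paper has in mind: the paper dismisses this lemma as "a simple consequence of the transport equations," and your extend-then-cut construction, the confinement of the iterates from Lemma~\ref{lmm:uniformbound}, and the forward-invariance of $B(0,R)$ under the gradient flow combined with the Duhamel representation \eqref{eq:unformula} are exactly the ingredients it relies on. (The endpoint subtlety you flag is in fact vacuous, since two $C^2$ functions agreeing on the open ball have matching derivatives up to the boundary by continuity, but your null-set resolution also works.)
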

Lemma~\ref{lmm:localchange} is a simple consequence of
transport equations \eqref{eq:u0-ode}
\eqref{eq:uell-ode}. Notably, we emphasize again that the
locality of the coefficient functions $u_{\ell}\left( x,t
\right)$ is in stark contrast with the solution of the
backward Kolmogorov equation \eqref{eq:backwardKol}, since
\eqref{eq:backwardKol} has diffusion effects which is
global. Lemma~\ref{lmm:localchange} indicates we can focus
on test functions compactly supported near the local minimum we care about. The main result of this paper is the following.

\begin{theorem}\label{thm:sgdasym}
Assume Assumption~\ref{ass:strongconv} holds, $\eta\leq
\eta_0$ and $R \in (R_0, R_1]$, for $R_0>0$, $R_1>0$ and $\eta_0>0$ defined as in Lemma~\ref{lmm:uniformbound}.
If $f(\cdot; \xi)\in C^7
\left( B \left( 0,R \right) \right)$ and
$\varphi\in C^6 \left( B \left( 0,R \right) \right)$,
then $u^1=u_0+\eta u_1$ approximates the dynamics of SGD
\eqref{eq:sgd-dynamics} \emph{with weak second order}, in the
sense that there exists a positive constant $C \left(
  \varphi,f,R \right)$ independent of $n$ such that
\begin{gather}
\sup_{x\in B(0, R)}|\mathbb{E}_x \left[ \varphi \left( X_n \right) \right]-u^1(x, n\eta)|\le C(\varphi, f,R) \eta^2.
\end{gather}
\end{theorem}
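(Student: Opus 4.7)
The plan is to run a standard weak backward error analysis against the transition operator $S$ of the SGD chain. The one-step local error will be shown to be not merely $O(\eta^3)$ but $O(\eta^3 e^{-\alpha k\eta})$, and then a telescoping sum combined with the $L^\infty$-nonexpansiveness of $S$ (which on $B(0,R)$ follows from Lemma~\ref{lmm:uniformbound}) will convert $n$ copies of this local error into a uniform-in-$n$ bound of order $\eta^2$. By Lemma~\ref{lmm:localchange} I may first replace $\varphi$ by a compactly supported function agreeing with it on $B(0,R)$ without affecting either side of the inequality, so all derivative norms below are taken in $L^\infty(B(0,R))$.

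First I would Taylor expand
\[
(Sg)(x)=\mathbb{E}_\xi\bigl[g(x-\eta\nabla f(x;\xi))\bigr]=g+\eta\mathcal{L}_1 g+\eta^2\bigl(\tfrac{1}{2}\mathcal{L}_1^2+\mathcal{L}_2\bigr)g+R_\eta g,
\]
where the $\eta^2$ coefficient is recovered via the unbiasedness of $\nabla f(\cdot;\xi)$, the identity $\mathbb{E}_\xi[\nabla f(\cdot;\xi)\otimes\nabla f(\cdot;\xi)]=\Sigma+\nabla f\otimes\nabla f$, and the algebraic relation $\nabla f^\top\nabla^2 g\,\nabla f=\mathcal{L}_1^2 g-\tfrac{1}{2}\nabla\|\nabla f\|^2\cdot\nabla g$. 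The remainder satisfies $|R_\eta g|\le C\eta^3\|D^3 g\|+C\eta^4\|D^4 g\|$, where $C$ depends only on $\sup_\xi\|\nabla f(\cdot;\xi)\|_{L^\infty(B(0,R))}$, which is finite by Assumption~\ref{ass:strongconv} and Lemma~\ref{lmm:uniformbound}.

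Next I apply this to $g=u^1(\cdot,k\eta)$ and subtract the second-order-in-time Taylor expansion of $u^1(\cdot,(k+1)\eta)$ about $k\eta$. Using the exact identity from Corollary~\ref{eq:corollaryapprox}, $\partial_t u^1=(\mathcal{L}_1+\eta\mathcal{L}_2)u^1-\eta^2\mathcal{L}_2 u_1$, together with $\partial_t u_0=\mathcal{L}_1 u_0$ and $\partial_t u_1=\mathcal{L}_1 u_1+\mathcal{L}_2 u_0$, every term of order $1$, $\eta$, and $\eta^2$ cancels, leaving expressions whose highest spatial derivatives are $D^4 u_0$, $D^3 u_1$, and $D^4 u_1$ (the last appearing only with the favorable $\eta^4$ prefactor from $R_\eta$), plus the time Taylor remainder involving $\partial_t^3 u^1$. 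A direct computation gives $\partial_t^3 u^1=\mathcal{L}_1^3 u_0+\eta\bigl[\mathcal{L}_1^3 u_1+\mathcal{L}_1^2\mathcal{L}_2 u_0+\mathcal{L}_1\mathcal{L}_2\mathcal{L}_1 u_0+\mathcal{L}_2\mathcal{L}_1^2 u_0\bigr]$, so it too only costs $D^4 u_0$ and $D^3 u_1$. Invoking Theorem~\ref{p:main} under the hypothesis $\varphi\in C^6$, $f\in C^7$, all of these derivatives decay like $e^{-\alpha t}$ for any fixed $\alpha<\gamma$, so
\[
\bigl\|(Su^1)(\cdot,k\eta)-u^1(\cdot,(k+1)\eta)\bigr\|_{L^\infty(B(0,R))}\le C(\varphi,f,R)\,\eta^3 e^{-\alpha k\eta}.
\]

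The conclusion then follows from the telescoping identity
\[
S^n\varphi-u^1(\cdot,n\eta)=\sum_{k=0}^{n-1}S^{n-1-k}\bigl[(Su^1)(\cdot,k\eta)-u^1(\cdot,(k+1)\eta)\bigr],
\]
the $L^\infty$-nonexpansiveness of $S$ on $B(0,R)$, and the geometric bound $\sum_{k\ge 0}\eta^3 e^{-\alpha k\eta}\le\eta^3/(1-e^{-\alpha\eta})\le C\eta^2$. The main obstacle throughout is engineering the cancellations in the local error so that every residual term is a derivative of $u_0$ or $u_1$ rather than of the true backward Kolmogorov solution $u$: only the former decay exponentially by Theorem~\ref{p:main}, and without that decay the naive telescoping estimate $O(n\eta^3)=O(T\eta^2)$ would not be uniform in time. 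That this cancellation goes through precisely at the truncation order $N=1$ is the mechanism that turns diffusion approximation, suitably corrected, into a legitimate long-time tool for SGD.
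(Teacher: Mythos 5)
Your proposal is correct and follows essentially the same route as the paper: localize via Lemma~\ref{lmm:localchange}, telescope against the $L^{\infty}$-nonexpansive operator $S$, match the spatial Taylor expansion of $Su^1$ with the temporal expansion of $u^1$ obtained from Corollary~\ref{eq:corollaryapprox} so that the one-step defect is $O(\eta^3 e^{-\alpha k\eta})$ by Theorem~\ref{p:main}, and sum the resulting geometric series to get a uniform-in-$n$ bound of order $\eta^2$. The only cosmetic difference is that the paper controls the time-Taylor remainder by iterating the integral form of $\partial_t u^1=(\cL_1+\eta \cL_2)u^1-\eta^{2}\cL_2u_1$ finitely many times, keeping every residual at spatial order at most four, rather than invoking $\partial_t^3 u^1$ wholesale; this is precisely the bookkeeping needed to stay within the stated $C^6/C^7$ regularity, and your truncated expression for $\partial_t^3 u^1$ implicitly does the same.
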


\begin{proof}
By Lemma~\ref{lmm:localchange}, we can assume without loss
of generality that $\varphi$ is compactly supported and
$\|\varphi\|_{C^k(\mathbb{R}^d)}\le
C_k\|\varphi\|_{C^k(B(0,R))}$ for sufficiently large
$k$. Let us recall the notation $U^n(x)=\mathbb{E}_x \left[ \varphi \left( X_n
  \right) \right]$ and that $S:L^{\infty} \left( \mathbb{R}^d \right)\rightarrow
L^{\infty} \left( \mathbb{R}^d \right)$ forms the semi-group $S^{(n)}=S^n$. Thus,
\begin{equation*}
  U^{n+1}(x)=\mathbb{E}(U^n(x-\eta \nabla f(x; \xi))):=SU^n(x).
\end{equation*}
Noticing that $U^n(x)=S^n\varphi(x)$ and
$\varphi(x)=u^N(x, 0)$, by a telescoping sum we have
\[
U^n(x)-u^N(x, n\eta)=\sum_{j=1}^nS^{n-j}(Su^N(x,(j-1)\eta)-u^N(x, j\eta)).
\]
By the fact that $S$ is $L^{\infty}$ nonexpansive,
\begin{equation}
\label{eq:telescoping-sum}
|U^n(x)-u^N(x, n\eta)|\le \sum_{j=1}^n \|Su^N(x,(j-1)\eta)-u^N(x, j\eta)\|_{L^{\infty}}.
\end{equation}

We fix $N=1$ and for the sake of convenience, we introduce
\begin{gather}
t_j:=j\eta.
\end{gather} 
By Corollary~\ref{eq:corollaryapprox}, it holds for $t\in [t_{j-1}, t_j]$ that
\begin{gather}
u^1(x,t)=u^1(x,t_{j-1})+\int_{t_{j-1}}^{t}(\cL_1+\eta \cL_2)u^1(x,s)\,ds-\eta^2\int_{t_{j-1}}^t\cL_2u_1(x,s)\,ds.
\end{gather}
Substituting this expression of $u^1$ into the right hand side (and repeatedly for some terms), one has
\begin{multline}
u^1(x,t)=u^1(x,t_{j-1})+(t-t^n)\cL_1u^1(x, t_{j-1})+\eta(t-t^n)\cL_2u^1(x,t_{j-1})+\frac{1}{2}(t-t^n)^2\cL_1^2u^1(x, t_{j-1})\\
+\eta\int_{t_{j-1}}^t\int_{t_{j-1}}^s(\cL_2(\cL_1+\eta \cL_2)+\cL_1\cL_2)u^1(x,\tau)\,d\tau ds
+\int_{t_{j-1}}^t\int_{t_{j-1}}^s\int_{t_{j-1}}^{\tau}\cL_1^2(\cL_1+\eta \cL_2)u^1\,dzd\tau ds\\
-\eta^2\int_{t_{j-1}}^t\cL_2u_1\,ds
-\eta^2\int_{t_{j-1}}^t\int_{t_{j-1}}^s(\cL_1+\eta\cL_2)\cL_2 u_1\,d\tau ds
-\eta^2\int_{t_{j-1}}^t\int_{t_{j-1}}^s\int_{t_{j-1}}^{\tau}\cL_1^2\cL_2u_1\,dzd\tau ds.
\end{multline}

Hence,
\begin{equation}
  \begin{aligned}
    \Big|u^1(x, j\eta) &-u^1(x, (j-1)\eta)-\eta(\cL_1+\eta \cL_2)u^1(x, (j-1)\eta) \\
    &-\frac{\eta^2}{2}\cL_1^2u^1(x, (j-1)\eta)\Big|
 \leq C\, \eta^3\sup_{t\in[t_{j-1}, t_j]}\left(\sum_{I=1}^4\sup_{x\in B(0,R)}(|\partial^I u^1|+|\partial^I u_1|)\right)\,.
  \end{aligned}
\end{equation}
By Theorem~\ref{p:main}, 
\begin{equation}\label{eq:onehand}
  \begin{aligned}
    \Big|u^1(x, j\eta) &-u^1(x, (j-1)\eta)-\eta(\cL_1+\eta \cL_2)u^1(x, (j-1)\eta) \\
    &-\frac{\eta^2}{2}\cL_1^2u^1(x, (j-1)\eta)\Big|
 \leq C(f,\varphi,R)\, \eta^3e^{-\gamma(j-1)\eta/2}.
  \end{aligned}
\end{equation}

In the meanwhile, applying Taylor expansion to $Su^1(x,
(j-1)\eta)=\mathbb E\left[u^1(x-\eta\nabla f(x,\xi)),(j-1)\eta\right]$
 and applying Theorem~\ref{p:main} gives
\begin{equation}
\label{eq:theotherhand}
  \begin{aligned}
    \Big|Su^1(x, (j-1)\eta)&-u^1(x, (j-1)\eta)-\eta(\cL_1+\eta \cL_2)u^1(x, (j-1)\eta)\\
    &-\frac{\eta^2}{2}\cL_1  u^1(x, (j-1)\eta)\Big|\leq C(f,\varphi,R)\eta^3e^{-\gamma(j-1)\eta/2}.
  \end{aligned}
\end{equation}
Combining \eqref{eq:onehand} and \eqref{eq:theotherhand}, we have
\[
|Su^1(x,(j-1)\eta)-u^1(x, j\eta)|\le C(f,\varphi,R)\eta^3 e^{-\gamma(j-1)\eta/2}
\]
and thus the right hand size of \eqref{eq:telescoping-sum}
can be further bounded by
\[
|U^n(x)-u^1(x, n\eta)|\le C(f,\varphi,R)\eta^2
\]
for some positive constant $C(f,\varphi,R)$ independent of $n$. This completes
the proof.
\end{proof}

The key contribution of Theorem~\ref{thm:sgdasym} is the
extension of the range of applicability of diffusion
approximation \eqref{eq:diffusion-approx-prev} from finite
time interval $\left[ 0,T \right]$ to infinite time. A
direct consequence is the following description of the ``weak
expansion'' of the stationary distribution of the dynamics
\eqref{eq:sgd-dynamics}.

\begin{corollary}\label{cor:behaviorofUn}
Under the same conditions as in Theorem~\ref{thm:sgdasym},
we have for all $n\gtrsim \frac{1}{\eta}\log (1/\eta)$ that
\[
\sup_{x\in B(0,R)}\left|\mathbb{E}_x\varphi(X_n)-\varphi(0)\right|=\sup_{x\in B(0,R)}\left|U^n(x)-\varphi(0)\right|\le C(\varphi, f,R)\eta
\]
for some positive constant $C \left( \varphi, f, R \right)$. Moreover, the probability measure in Lemma \ref{lmm:w2conv} satisfies
\[
\left|\int_{\mathbb{R}^d} \varphi\,\mathrm{d}\pi-\varphi(0)-\eta \varphi_1 \right|\le C\eta^2,
\]
where $\varphi_1=\lim_{t\to\infty}u_1(x, t)=\int_0^{\infty}\cL_2u_0(0,s)\,ds$ is independent of $x$.
\end{corollary}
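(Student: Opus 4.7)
The plan is to derive both estimates directly from Theorem~\ref{thm:sgdasym} together with the exponential decay rates for $u_0$ and $u_1$ provided by Theorem~\ref{p:main}. Essentially, the uniform-in-time weak approximation identifies $U^n(x)$ with $u^1(x,n\eta)=u_0(x,n\eta)+\eta u_1(x,n\eta)$ up to $O(\eta^2)$, and the long-time limits $u_0(x,n\eta)\to\varphi(0)$ and $u_1(x,n\eta)\to\varphi_1$ then give both the pointwise-in-$n$ bound (first estimate) and the stationary-distribution expansion (second estimate).

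For the first estimate, I would split by the triangle inequality
\[
\bigl|U^n(x)-\varphi(0)\bigr|\le \bigl|U^n(x)-u^1(x,n\eta)\bigr|+\bigl|u_0(x,n\eta)-\varphi(0)\bigr|+\eta\bigl|u_1(x,n\eta)\bigr|.
\]
The first summand is $\le C(\varphi,f,R)\eta^2$ for every $n$ by Theorem~\ref{thm:sgdasym}. Theorem~\ref{p:main}(i) bounds the second summand by $R\|\varphi\|_{C^1(B(0,R))}e^{-\gamma n\eta}$, which is $\le C\eta$ exactly when $n\eta\ge \gamma^{-1}\log(1/\eta)$, i.e., $n\gtrsim \frac{1}{\eta}\log(1/\eta)$. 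For the third summand, Theorem~\ref{p:main}(ii) applied with $n=1$ yields $|u_1(x,n\eta)-\varphi_1|\le Pe^{-\gamma' n\eta}$, so $|u_1(x,n\eta)|\le |\varphi_1|+P$ is bounded by a constant depending only on $\varphi$, $f$, $\Sigma$, $R$, making $\eta|u_1(x,n\eta)|=O(\eta)$. Summing the three contributions gives the claimed $O(\eta)$ bound.

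For the second estimate, I would pass to the limit $n\to\infty$ inside the uniform-in-time bound of Theorem~\ref{thm:sgdasym}. By Lemma~\ref{lmm:localchange} we may replace $\varphi$ by a compactly supported $C^6$ function without affecting any quantity appearing in the statement, so in particular $\varphi$ is bounded and Lipschitz. Lemma~\ref{lmm:w2conv} then yields
\[
\Bigl|U^n(x)-\int_{\mathbb{R}^d}\varphi\,d\pi\Bigr|\le \mathrm{Lip}(\varphi)\,W_2(\mu_n,\pi)\longrightarrow 0
\]
as $n\to\infty$, while Theorem~\ref{p:main} gives $u^1(x,n\eta)\to \varphi(0)+\eta\varphi_1$ with exponential rate. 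Sending $n\to\infty$ in $|U^n(x)-u^1(x,n\eta)|\le C(\varphi,f,R)\eta^2$ produces the desired expansion $|\int\varphi\,d\pi-\varphi(0)-\eta\varphi_1|\le C\eta^2$.

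Since the substantive estimates are already contained in Theorems~\ref{p:main} and \ref{thm:sgdasym}, there is no genuine obstacle here; the only minor point of care is to ensure the Wasserstein convergence from Lemma~\ref{lmm:w2conv} may be applied against the test function, which is handled by the compactly-supported modification afforded by Lemma~\ref{lmm:localchange}.
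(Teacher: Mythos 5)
Your proposal is correct and takes essentially the same route as the paper: the paper likewise deduces the first bound by combining Theorem~\ref{thm:sgdasym} with the observation that $|u^1(x,n\eta)-\varphi(0)|\le C(f,\varphi,R)\eta$ once $n\gtrsim \eta^{-1}\log(1/\eta)$, and the stationary-measure expansion by letting $n\to\infty$ in the uniform-in-time estimate. Your write-up merely spells out the details (the triangle-inequality split and the Wasserstein-to-weak-convergence step) that the paper leaves implicit.
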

The conclusion follows immediately from noting that, for
$n\gtrsim \eta^{-1}\log(\eta^{-1})$,
\[
|u^1(x, n\eta)-\varphi(0)|=|u_0(x,n\eta)+\eta u_1(x, n\eta)-\varphi(0)|\le C(f,\varphi,R)\eta.
\]
In particular, if we choose $\varphi=f$,
Corollary~\ref{cor:behaviorofUn} tells us that SGD descends
the value of a strongly convex objective function to an
$\mathcal{O}\left( \eta \right)$ neighborhood of the global
minimum in only $\mathcal{O} \left( \eta^{-1}\log (\eta^{-1})
\right)$ time. Measured in the time scale of diffusion
approximation, where $t=n\eta$ in $u^1 \left( x,n\eta
\right)$, this is equivalent to say that the SGD dynamics
reduces the objective value to $\mathcal{O}\left( \eta
\right)$ away from the global minimum within time
$n\eta=\mathcal{O}\left( \log (1/\eta) \right)$, which is
exponentially fast, as well known.

At last, we remark that if $X_0$ starts with a measure $\mu_0$ instead of $X_0=x$, then
$\int (u_0+\eta u_1)(t, x)\mu_0(dx)$ will approximate $\mathbb{E}\varphi(X_n)$ uniformly in time. We may further rewrite the quantity as
\begin{gather}
\int_{\mathbb{R}^d} (u_0(x,t)+\eta u_1(x,t))\mu_0(dx)=\int_{\mathbb{R}^d} \varphi(x) (\nu_0(dx)+\eta\nu_1(dx)),
\end{gather}
with $\nu_0, \nu_1$ respectively satisfying (see Appendix \ref{eq:formaleq} for a formal derivation):
\begin{gather}\label{eq:nu0}
\partial_t \nu_0-\nabla\cdot(\nabla f \nu_0)=0,~~\nu_0(0)=\mu_0,
\end{gather}
and
\begin{gather}\label{eq:nu1}
\partial_t\nu_1-\nabla\cdot(\nabla f\nu_1)=\frac{1}{4}\nabla\cdot(\nabla\|\nabla f\|^2 \nu_0)+\frac{1}{2}\partial_{ij}(\Sigma_{ij}\nu_0),~~\nu_1(0)=0.
\end{gather}
Theorem \ref{thm:sgdasym} then implies that $\nu^1:=\nu_0+\eta\nu_1$ or $\frac{\nu_0+\eta\nu_1}{M(\nu_0+\eta\nu_1)}$ approximates
the distribution of $X_n$ with second weak order, where $M(\nu)$ means the total mass of $\nu$: 
\[
M(\nu):=\int_{\mathbb{R}^d}d\nu.
\]

\begin{remark}\label{rmk:higher}
The weak order of approximation $\mathcal{O}\left( \eta^2
\right)$ in Theorem~\ref{thm:sgdasym} is optimal in the
sense that no higher order approximation error can be
achieved by choosing $N>1$ in \eqref{eq:truncatedsum},
although the formal truncated series $u^N$ may better
approximate the Kolmogorov equation
\eqref{eq:backwardKol}. This is because the diffusion approximation
itself is only a weak second order approximation for SGD
\cite[Theorem 2.2]{feng2017}. Higher order
approximation for the SGD dynamics requires higher
derivatives of $u$ in the PDE \eqref{eq:backwardKol}, but
it no longer describes a diffusion process (solutions of It\^o
equations).
\end{remark}

\section{Numerical Experiments}
\label{sec:numer-exper}

In this section we demonstrate the approximation power of
the truncated formal series \eqref{eq:truncatedsum} with
numerical experiments for some one-dimensional ($d=1$)
examples. We consider SGD schemes
\begin{equation}
\label{eq:sgd-example-one-dim}
  f \left( x;\xi \right)=f \left( x \right)+\frac{\xi}{2}x,\quad x\in \mathbb{R}
\end{equation}
where $f:\mathbb{R}\rightarrow\mathbb{R}$ is locally
strongly convex near one of its local minima, and $\xi$ is a
Rademacher random variable that assigns equal probability
$1/2$ to both $-1$ and $+1$. Following the definitions in
\eqref{eq:differential-ops}, we have explicitly
\begin{align}
   \cL_1 =-f' \left( x \right)\partial_x,\qquad\cL_2
  =-\frac{1}{2}f' \left( x \right)f'' \left( x \right)\partial_x+\frac{1}{8}\partial_x^2.
\end{align}
The first two terms in the formal series expansion
\eqref{eq:asyexp} can be determined by solving the two first
order PDEs sequentially: First solve
\begin{equation}
  \begin{aligned}
    &\partial_t u_0+f' \left( x \right)\partial_x u_0 \left( x,t \right)=0\\
    & u_0(x,0)=\varphi(x)
  \end{aligned}
\end{equation}
to get
\begin{equation}
\label{eq:u0ana}
  u_0 \left( x,t \right)=\varphi \left( x_0 \left( x,t \right)\right),
\end{equation}
where $x_0 \left( x,t \right)$ is the intercept of the
characteristic line passing through $\left( x,t
\right)\in\mathbb{R}\times\mathbb{R}_+$. We then use
\eqref{eq:u0ana} to solve
\begin{equation}
  \begin{aligned}
    &\partial_t u_1+f' \left( x \right)\partial_x u_1 \left(
      x,t \right)=-\frac{1}{2}f' \left( x \right)f'' \left( x \right)\partial_xu_0+\frac{1}{8}\partial_x^2 u_0\\
    & u_1(x,0)=0
  \end{aligned}
\end{equation}
which gives
\begin{equation}
\begin{aligned}
  u_1&\left( x,t \right)=-\frac{1}{2}f' \left( x_0 \left( x,t \right) \right)\varphi' \left( x_0\left( x,t \right) \right)\log \frac{f' \left( x \right)}{f' \left( x_0\left( x,t \right) \right)}+\frac{1}{8}f' \left( x_0 \left( x,t \right)\right)f'' \left( x_0 \left( x,t \right) \right)\varphi' \left( x_0 \left( x,t \right)\right)\int_{x_0 \left( x,t \right)}^x \frac{\mathrm{d}\xi}{\left[ f' \left( \xi \right) \right]^3}\\
   &-\frac{1}{16}f' \left( x_0 \left( x,t \right)\right)\varphi' \left( x_0 \left( x,t \right)\right)\left\{ \frac{1}{\left[ f' \left( x_0 \left( x,t \right) \right) \right]^2}-\frac{1}{\left[ f' \left( x \right) \right]^2}\right\}+\frac{1}{8}\left[ f' \left( x_0 \left( x,t \right) \right) \right]^2\varphi'' \left( x_0 \left( x,t \right)\right)\int_{x_0 \left( x,t \right)}^x \frac{\mathrm{d}\xi}{\left[ f' \left( \xi \right) \right]^3}.
\end{aligned}
\end{equation}
Details of this computation can be
found in Appendix~\ref{sec:examples-full}.

\begin{example}
\label{exm:simple}
  We consider a simple example
\begin{equation}
\label{eq:sgd-example-1}
  f \left( x \right)=\frac{1}{2}x^2-\frac{1}{2}x.
\end{equation}
The stochastic gradient updates are
\begin{equation*}
  X_{n+1}=X_n-\eta\nabla f \left( X_n; \xi_n \right)=\left(
    1-\eta \right)X_n-\frac{\eta}{2} \left(1-\xi_n\right)
\end{equation*}
where $\left\{ \xi_n \right\}_{n\geq 0}$ are
i.i.d. standard Rademacher random
variables. The limiting distribution of this Markov chain is identical
to that of $X_{\infty}=\eta \sum_{j=0}^{\infty}\theta_j \left( 1-\eta
  \right)^j$ where the $\theta_j$'s are
  i.i.d. $\mathrm{Bernoulli}\left( 1/2 \right)$ random
  variables. The infinite series converges whenever
  $\eta\in \left( 0,1 \right)$, but the stationary distribution is drastically
  different for
different values of $\eta$ \cite[\S2.5]{DF1999}:
If $\eta=1/2$, $X_{\infty}$ is uniformly distributed on
$\left[ 0,1 \right]$; if $1/2<\eta<1$, the distribution of
$X_{\infty}$ is singular (supported on a set of Lebesgue
measure $0$); if $0<\eta<1/2$, for some values of $\eta$ the
stationary distribution is singular, but it has also been
established that for almost all $\eta\in \left( 0,1/2
\right)$ the stationary distribution is absolutely
continuous. We are most interested in the regime $\eta\in
\left( 0,1/2 \right)$ where $\eta$ is small.

We choose several different test functions $\varphi$ to
verify the order of the weak approximation error between
$U^n \left( x \right) = \mathbb{E}_x\left[ \varphi \left( X_n \right) \right]$ and
$u^1=u_0+\eta u_1$ established in
Theorem~\ref{thm:sgdasym}. The results are summarized in
Figure~\ref{fig:loglog} and Figure~\ref{fig:approx}.

\begin{figure}[h]
  \centering
  \includegraphics[width=1.0\textwidth]{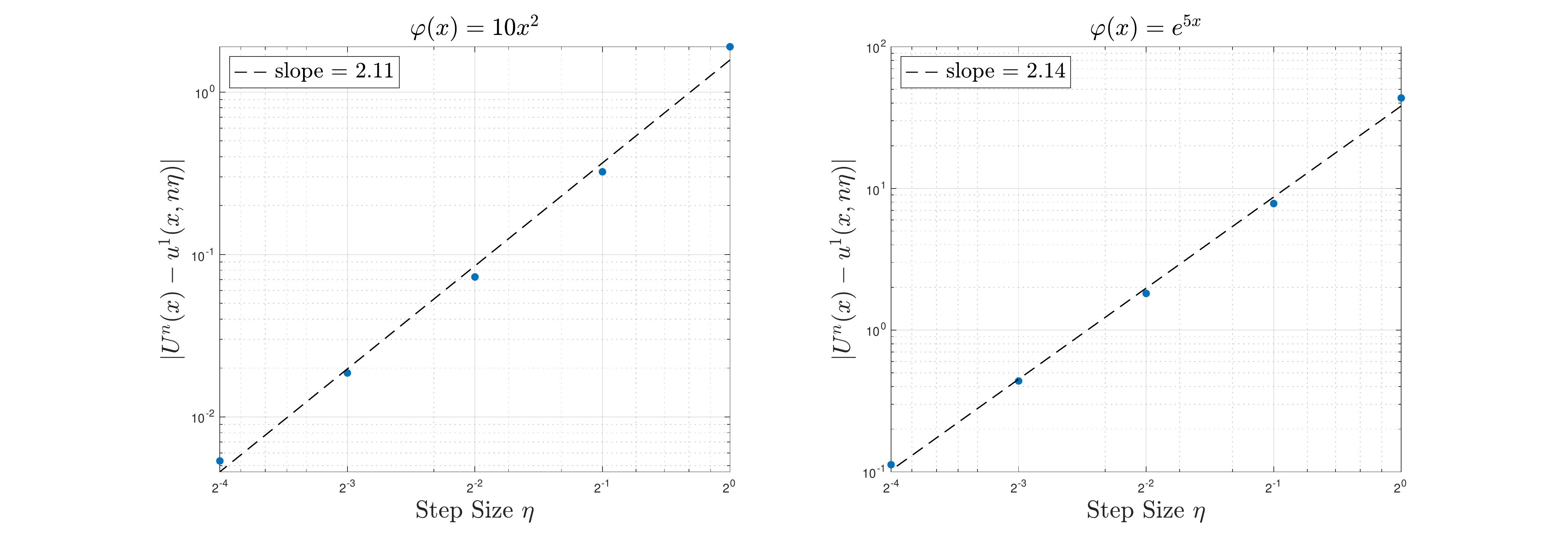}
  \caption{Log-log plots numerically verifying the
    weak second order diffusion approximation established in
    Theorem~\ref{thm:sgdasym}, using example
    \eqref{eq:sgd-example-1} and two different test functions
    $\varphi$. For each $\varphi$, we fix $x=1$ and
    $n\eta=5$, then let $\eta$
    vary in $\{ 2^{-4},2^{-3},2^{-2},2^{-1},2^0 \}$. We use
    a Monte--Carlo simulation to evaluate $U^n \left(
      x\right)=\mathbb{E}_x \left[ \varphi\left(X_n\right)
    \right]$, by averaging $\varphi \left( X_n \right)$ over
    $10^8$ independent trajectories starting from
    $X_0=x$. The slopes of the fitting lines are close to
    $2$, which justify the second order approximation established
   in Theorem~\ref{thm:sgdasym}.}
  \label{fig:loglog}
\end{figure}

\begin{figure}[h]
  \centering
  \includegraphics[width=1.0\textwidth]{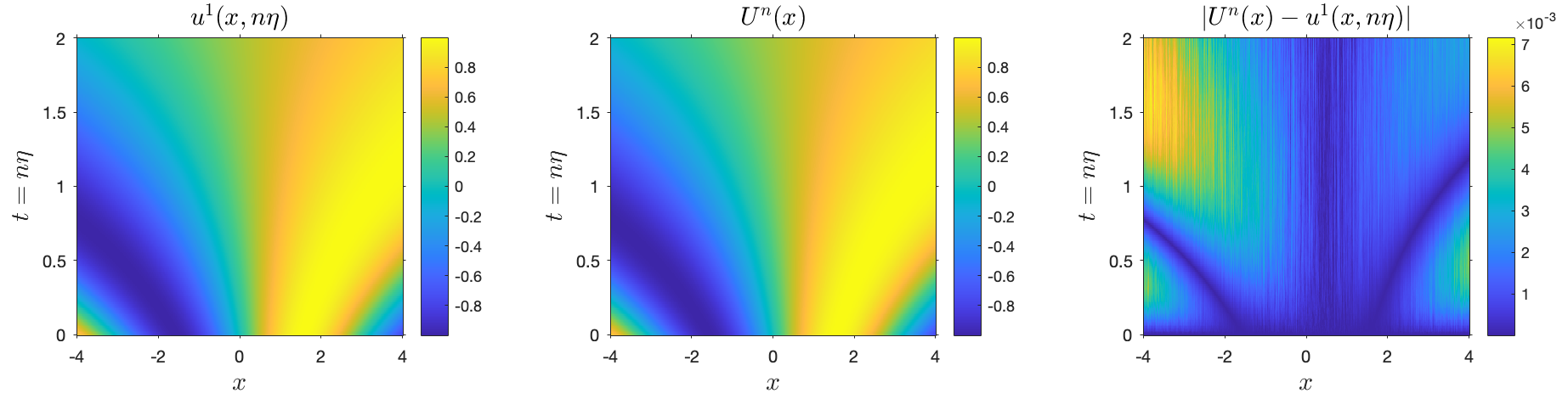}
  \caption{Visual comparison of $u^1 \left( x,n\eta
    \right)$ and $U^n \left( x \right)$ for $\varphi \left(
      x \right)=\sin \left( x \right)$ over $\left( x,t
    \right)\in \left[ -4,4 \right]\times\left[
      0,2 \right]$, with $\eta=0.01$. Each $U^n \left(
      x\right)$ is evaluated over $10^4$ independent
    trajectories generated from the gradient dynamics associated with \eqref{eq:sgd-example-1}.}
  \label{fig:approx}
\end{figure}
\end{example}

\begin{example}
We now consider a more complicated example in which the gradient $\nabla f$
is nonlinear. Set
\begin{equation}
\label{eq:sgd-example-2}
  f \left( x \right)=\frac{1}{2}x^2+0.1x^3
\end{equation}
and the stochastic gradient updates can be written as
\begin{equation*}
  X_{n+1}=X_n-\eta\nabla f \left( X_n; \xi_n
  \right)=\left( 1-\eta \right)X_n-0.3\eta X_n^2-\frac{\eta}{2}\xi
\end{equation*}
where $\left\{ \xi_n \right\}_{n\geq 0}$ are
i.i.d. standard Rademacher random
variables. We choose the same test functions $\varphi$ as in
Example~\ref{exm:simple}. The results are summarized in Figure~\ref{fig:loglog2}.
  \begin{figure}[h]
  \centering
  \includegraphics[width=1.0\textwidth]{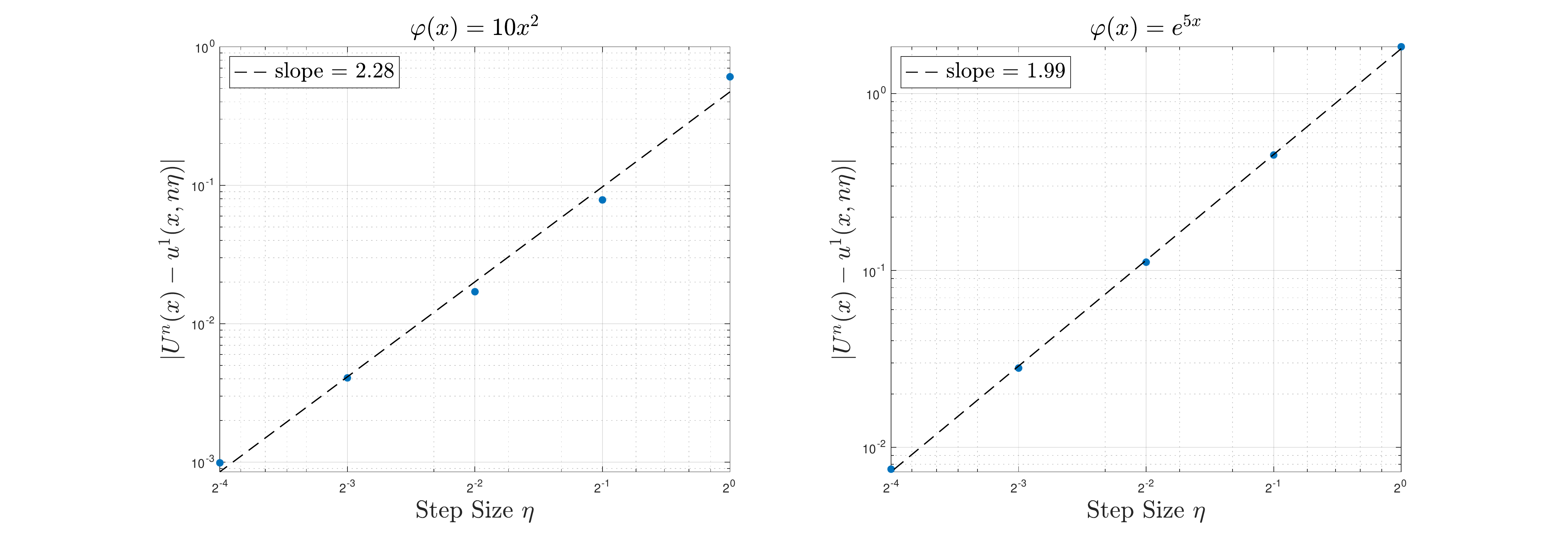}
  \caption{Log-log plots numerically verifying the
    weak second order diffusion approximation established in
    Theorem~\ref{thm:sgdasym}, using example
    \eqref{eq:sgd-example-2} and two different test functions
    $\varphi$. For each $\varphi$, we fix $x=1$ and
    $n\eta=5$, then let $\eta$
    vary in $\{ 2^{-4},2^{-3},2^{-2},2^{-1},2^0 \}$. We use
    a Monte--Carlo simulation to evaluate $U^n \left(
      x\right)=\mathbb{E}_x \left[ \varphi\left(X_n\right)
    \right]$, by averaging $\varphi \left( X_n \right)$ over
    $10^8$ independent trajectories starting from
    $X_0=x$. The slopes of the fitting lines are close to
    $2$, which justify the second order approximation established
   in Theorem~\ref{thm:sgdasym}.}
  \label{fig:loglog2}
\end{figure}
\end{example}

\section{Conclusion}
\label{sec:conclusion}

In this paper, we establish uniform-in-time weak error bounds for diffusion approximation of SGD algorithms, under the local strong convexity assumption for the objective functions. To this end, we adapted the idea of backward error analysis in numerical SDEs, and used a truncated formal series expansion with respect to the constant step size for the backward Kolmogorov equation associated with the modified SDE---instead of the solution itself---to approximate the SGD iterates for arbitrarily long time. This enables us to draw quantitative conclusions for the weak asymptotic behavior of the SGD iterates from
estimates of the coefficient functions of the truncated formal expansion, which is the first result of this type for diffusion-approximation-based SGD analysis. We believe the tools developed in this paper have great potential in
generalizing the range of applicability of diffusion approximation to many other stochastic optimization algorithms in data science, such as SGD with non-constant step size and momentum-based acceleration techniques.

\section*{Acknowledgement}
The work of J.-G. Liu was partially supported by KI-Net NSF RNMS11-07444 and NSF DMS-1812573.
The work of L. Li was partially sponsored by NSFC 11901389, Shanghai Sailing Program 19YF1421300 and NSFC 11971314. 
The work of T. Gao was partially supported by NSF DMS-1854831.

\appendix

\section{Proofs of Technical Lemmas in
  Section~\ref{sec:form-series-expans}}
\label{sec:technical-lemmas}

\begin{proof}[Proof of Lemma~\ref{lmm:uniformbound}]
  By \eqref{eq:sgd-dynamics}, we have
\begin{align}\label{e:ltmp1}
\left\|X_{n+1}\right\|^2&=\left\|X_n\right\|^2-2\eta\left(\nabla f(X_n; \xi)-\nabla f(0; \xi)\right)\cdot X_n+\eta^2\left\|\nabla f(X_n; \xi)\right\|^2-2\eta \nabla f(0; \xi)\cdot X_n\\
\nonumber
&\leq \left\|X_n\right\|^2-2\gamma \eta \left\|X_n\right\|^2+\eta^2(b+\gamma|X_n|)^2-2\eta \nabla f(0; \xi)\cdot X_n\,,
\end{align}
where we applied the strong convexity of $f(\cdot; \xi)$ in the last inequality and the fact that
\[
|\nabla f(X_n; \xi_n)|\le |\nabla f(0;\xi_n)|+\gamma |X_n|\le b+\gamma |X_n|.
\]. 
When $\left\|X_n\right\| \leq \frac{R}{2}$, \eqref{e:ltmp1} can be further controlled by
\begin{align*}
\left\|X_{n+1}\right\|^2 &\leq (1-2\gamma \eta)\frac{R^2}{4}+\eta^2b^2+\eta^2\gamma b R
+\eta^2\frac{\gamma^2 R^2}{4}+\eta b R.
\end{align*}
Noting that $-\frac{1}{2}\gamma \eta R^2+ \eta^2\frac{\gamma^2 R^2}{4}\le -\frac{3}{8}\gamma \eta R^2$, we find
\begin{align*}
\left\|X_{n+1}\right\|^2 & \leq \frac{R^2}{4}+\gamma \eta R(-\frac{3R}{8}+\eta b)
+\eta b(\eta b+R)\\
&\le \frac{R^2}{4}+\frac{3R}{8}*\frac{11R}{8} < R^2.
\end{align*}

When $\frac{R}{2}\leq |X_n|\leq R$, we have
\begin{align*}
\left\|X_{n+1}\right\|^2 &\leq \left\|X_n\right\|^2+(-2\gamma
                           \eta +\eta^2\gamma^2) \cdot \frac{R^2}{4}+2\gamma b R\eta^2+\eta^2b^2+2\eta  bR\\
                     &\le \left\|X_n\right\|^2+(2b-\frac{3\gamma  R^2}{8})\eta
                     +\eta^2(2\gamma b R+b^2) \leq |X_n|^2.
\end{align*}
Thus the conclusion follows.
\end{proof}

\begin{proof}[Proof of Lemma~\ref{lmm:w2conv}]
  Consider two copies of the chain
\begin{gather}
\begin{split}
Y_{n+1}=Y_n-\eta \nabla f(Y_n; \xi_n ),\quad Z_{n+1}=Z_n-\eta \nabla f(Z_n; \xi_n).
\end{split}
\end{gather}
The two chains are coupled through the random variable
$\xi_n$. This means that they pick the same function to
compute the gradient at every iteration $n$. Meanwhile, each chain
has the same asymptotic distributional behavior as the SGD. We then have
\begin{gather*}
\mathbb{E}\left\|Y_{n+1}-Z_{n+1}\right\|^2
=\mathbb{E}\left\|Y_n-Z_n\right\|^2-2\eta \mathbb{E}\left[(Y_n-Z_n)\cdot(\nabla f(Y_n, \xi_n)-\nabla f(Z_n, \xi_n))\right]
+\eta^2 \mathbb{E}\left\|\nabla f(Y_n, \xi_n)-\nabla f(Z_n, \xi_n)\right\|^2.
\end{gather*}
For the second term, we use conditional expectation to
deduce that
\begin{multline*}
\mathbb{E}\left[(Y_n-Z_n)\cdot(\nabla f(Y_n, \xi_n)-\nabla f(Z_n, \xi_n))\right]\\
=\mathbb{E}\left[ (Y_n-Z_n)\cdot \mathbb{E}\left[\nabla f(Y_n, \xi_n)-\nabla f(Z_n, \xi_n) | Y_m, Z_m, m\le n\right]\right]\\
=\mathbb{E}\left[(Y_n-Z_n)\cdot(\nabla f(Y_n)-\nabla f(Z_n))\right]\ge \gamma\mathbb{E}\left\|Y_n-Z_n\right\|^2.
\end{multline*}
The last term is upper bounded by
\[
\eta^2\mathbb{E}\left\|\nabla f(Y_n, \xi_n)-\nabla f(Z_n, \xi_n)\right\|^2 \le \eta^2L^2 \mathbb{E}|Y_n-Z_n|^2.
\]
Therefore, it follows that
\[
\mathbb{E}\left\|Y_{n+1}-Z_{n+1}\right\|^2\le (1-2\gamma \eta+\eta^2 L^2)\mathbb{E}\left\|Y_n-Z_n\right\|^2.
\]
Now, if $\eta<2\gamma/L^2$, then $0<1-2\gamma\eta+\eta^2 L^2<1$. We claim that under this choice of $\eta$, the law of $X_n$ is a Cauchy sequence under the $W_2$ norm. In fact, for any $\epsilon>0$, we can pick $m>0$ such that $(2R)^{2}(1-2\gamma\eta+\eta^2L^2)^m<\epsilon^2/4$.  For $n\ge m$, we pick $Y_0$ to have the same distribution as $X_0$ and $Z_0$ to have the same distribution as $X_{n-m}$. Then, $Y_m$ has the same distribution as $X_m$ while $Z_m$ has the same distribution as $X_n$. Moreover,
\begin{gather}\label{eq:YZcouple}
\mathbb{E}\left\|Y_{m}-Z_{m}\right\|^2\le (1-2\gamma\eta+\eta^2 L^2)^m\mathbb{E}\left\|Y_{0}-Z_{0}\right\|^2< \epsilon^2/4.
\end{gather}
It follows that
\[
\left(\mathbb{E}\left\|Y_m-Z_m\right\|^2\right)^{1/2}<\epsilon/2.
\]

We recall that the Wasserstein-$2$ distance is given by
\begin{gather}\label{eq:W2}
W_2(\mu, \nu)=\left(\inf_{\gamma \in \Pi(\mu,\nu)}\int_{\mathbb{R}^d\times\mathbb{R}^d}|x-y|^2 d\gamma\right)^{1/2},
\end{gather}
where $\Pi(\mu,\nu)$ means the set of all the joint distributions $\gamma$ whose marginal distributions are $\mu$ and $\nu$ respectively.
Since the joint distribution of $(Y_m, Z_m)$ is in $\Pi(\mu_n, \mu_m)$, one finds $W_2(\mu_n, \mu_m)<\epsilon/2$. This means that
$\mu_n$ is a Cauchy sequence, and it holds for some probability distribution $\pi$ that
\[
\lim_{n\to\infty}W_2(\mu_n, \pi)=0.
\]
 Finally, we obtain
from \eqref{eq:YZcouple} that
\[
W_2(\mu_n, \mu_m)\le (1-2\gamma\eta+\eta^2 L^2)^{m/2}\sqrt{\mathbb{E}\left\|Y_{0}-Z_{0}\right\|^2}
\le C(1-2\gamma\eta+\eta^2 L)^{m/2},
\]
where $C$ is independent of $m,n$ (the second moment of
$X_{n-m}$ is uniformly bounded). The conclusion follows from
taking the limit $n\to\infty$.
\end{proof}

\section{Proof of the Exponential Decay Estimates}
\label{sec:proof-main-technical-thm}

\begin{proof}[Proof of Theorem~\ref{p:main}]
The genesis of the exponential decay rates of the $u_{\ell}$'s can be traced
back to the following simple yet important observation: Suppose $y(t)$ satisfies
\begin{align}\label{e:trans}
\dot{y}=-\nabla f(y)
\end{align}
with $y(0)=x$, then $\left\|y(t)\right\|$ is a non-increasing function and
\begin{align}\label{e:expdecay}
\left\|y(t)\right\|\le \left\|y(0)\right\| e^{-\gamma t}.
\end{align}

We now begin our proof. First by the method of characteristics \cite[Theorem 5.34]{Vil03}, one notices that $u_0$ satisfies
\begin{align} \label{e:u0}
&\partial_tu_0+\nabla f(x)\cdot\nabla u_0=0 \,,\\
\label{e:u0initial}
&u_0(x,0)=\varphi(x)\,.
\end{align}
Let $y$ be the function in~\eqref{e:trans} with $y(0)=x\in B(0,R)$. And for any given $T>0$, $t\in [0,T]$, define $z(t):=y(T-t)$. Then it follows that
\begin{align*}
u_0(z(t), t)=\varphi(z(0))\,,~\forall t\in [0,T]\,.
\end{align*}
Consequently, we have $u_0(x, t)=\varphi(y(t))\,,~ \forall t>0\,.$
Hence,
\[
|u_0(x,t)-\varphi(0)|\le \|\nabla\varphi\|_{L^{\infty}(B(0,R))}|y(t)|\le R\|\varphi\|_{C^1(B(0,R))}e^{-\gamma t}.
\]
For the estimate of derivatives, we use induction. When $k=1$, following from equations~\eqref{e:u0} and~\eqref{e:u0initial}, we have
\begin{align*}
\partial_t \left\|\nabla u_0\right\|^2= -2\nabla u_0 \cdot \nabla^2 f \cdot\nabla u_0-\nabla f \cdot \nabla \left\|\nabla u_0\right\|^2\quad\textrm{and}\quad\left\|\nabla u_0(x,0)\right\|^2=\left\|\nabla \phi(x)\right\|^2.
\end{align*}
Since $f$ is strongly convex,
\begin{align}\label{e:deri}
\partial_t \left\|\nabla u_0\right\|^2 \leq -2\gamma\left\|\nabla u_0\right\|^2-\nabla f \cdot \nabla \left\|\nabla u_0\right\|^2 \,.
\end{align}
Recall $y(t)$ which was defined in equation~\eqref{e:trans} and $z(t)=y(T-t)$.  By chain rule, equation~\eqref{e:deri} yields that $\frac{d}{dt}\|\nabla u_0 (z(t),t))\|^2\leq-2\gamma \|\nabla u_0(z(t),t)\|^2 $, which by Gronwall's inequality further yields 
\begin{align*}
\left\|\nabla u_0(z(t),t)\right\|\leq e^{-\gamma t}\|\nabla u_0(z(0),0)\| \leq e^ {-\gamma t}\left\|\nabla \varphi(z(0))\right\| \,, ~\forall t \in [0,T]\,.
\end{align*}
This then yields
\begin{align*}
\left\|\nabla u_0(x,t)\right\|&\leq e^{-\gamma t}\left\|\nabla \varphi (y(t))\right\|\leq \|\nabla \varphi\|_{L^\infty(B(0,R))} e^{-\gamma t}\\
&\leq \|\varphi\|_{C^1(B(0,R))}e^{-\gamma t}\,,~\forall t>0\,, x\in B(0,R)\,.
\end{align*}
Hence inequality~\eqref{e:u0deri} is verified for $k=1$.
By induction, we assume for any $k\leq m$, inequality~\eqref{e:u0deri} holds. Next we study the case for $k=m+1$. For $J\in I_{m+1}$, we differentiate equation~\eqref{e:u0} by $\partial^{J}$ and get $\partial_t \partial^J u_0 +\partial ^J(\nabla f\cdot \nabla u_0)=0$. Then multiplying both sides by $\partial^J u_0$ and summing over all  $J\in I_{m+1}$ gives
\begin{align*}
\partial_t v =-2 \sum_{J\in I_{m+1}}\sum_{i=1}^d \partial^J u_0\partial^J(\partial_i f\partial_i u_0)\,,
\end{align*}
where $v=\sum_{J\in I_{m+1}}(\partial^J u_0)^2$\,.
 We note that the right hand side can be splitted into the sum of three terms according to the general Leibniz rule in calculus. And then the above equation becomes
 \begin{align}\label{e:final}
\partial_t v \leq -2 (n+1)\gamma v-\nabla f \cdot \nabla v
-2 \sum_{i=1}^d\sum_{J\in I_{m+1}}\partial^J u_0
\sum_{J_0\leq J,|J_0|\geq 2} \partial^{J_0} \partial_i f \partial^{J-J_0}\partial_i u_0\,.
\end{align}
 Here is a brief explanation of the above inequality~\eqref{e:final}. For  $k\in \{1,\cdots, {m+1}\},~ j_k \in \{1,\cdots,d\}$, putting the first order derivative $\partial_{j_k}$ on $\partial_i f$ and  $\partial^{J- \{j_k\}}$ on $\partial_iu_0$, we would obtain
 \begin{align*}
 -2\sum_{J\in I_{m+1}}\sum_{k=1}^{m+1}\,\sum_{i, j_k=1}^d\partial^J u_0 \partial_{j_k}\partial_i  f \partial^{J-\{j_k\}}\partial_i u_0\,,
 \end{align*}
which is a quadratic form associated with the Hessian matrix $\nabla^2 f$. This also explains why we do not use the traditional  definition of multi-index in our paper (the question related to Remark~\ref{r:multin}). By the strong convexity of $f$, the above term is bounded above by
\begin{align*}
-2\sum_{J\in I_{m+1}}\sum_{k=1}^{m+1}\,\sum_{i,j_k=1}^{d} \gamma ( \partial^{J-\{j_k\}} \partial_i u_0)^2\,,
\end{align*}
which can be further bounded above by $-2(m+1)\gamma v$\,. To put all the $J^{th}$ derivative on $\partial_i u_0$ yields to the second term $-\nabla f\cdot \nabla v$\,.
 For the third term, we only need to consider the rest terms due to the Leibniz rule. Hence the validity of~\eqref{e:final} has been proved.

 For the last term in~\eqref{e:final} , we use Young's inequality
and the induction assumption,  then derive that
\begin{align*}
\partial_t v \leq -2m \gamma v - \nabla f \cdot \nabla v +
  P\Big(\|\varphi\|_{C^m(B(0,R))},
  \|f\|_{C^{m+2}(B(0,R))}\Big)e^{-2\gamma t}\,.
\end{align*}
We also note that $|v(x,0)|\leq \|\varphi \|_{C^{m+1}(B(0,R))} $, for $x\in B(0,R)$. Hence we get
\begin{align*}
v(z(t),t)\leq P\Big(\|\varphi\|_{C^{m+1}(B(0,R))}, \|f\|_{C^{m+2}(B(0,R))}\Big) e^{-2\gamma t}\,, ~\forall t\in [0,T]\,.
\end{align*}
This then gives
\begin{align*}
v(x,t)\leq P\Big(\|\varphi\|_{C^{m+1}(B(0,R))}, \|f\|_{C^{m+2}(B(0,R))}\Big) e^{-2\gamma t}\,, ~\forall t>0\,.
\end{align*}
Hence result~\eqref{e:u0deri} is proved.

Now we start to study $u_n$. The equation which $u_n$ satisfies is the following
\begin{align*}
&\partial_tu_n+\nabla f\cdot\nabla u_n=\cL_2u_{n-1}\,,\\
& u_n(x,0)=0\,.
\end{align*}
Based on this, we could write down a formula for $u_n$,
\begin{gather}\label{eq:unformula}
u_n(x, t)=\int_0^t \cL_2u_{n-1}(y(s),t- s)\,ds\,.
\end{gather}
Here we recall that $y$ satisfies equations \eqref{e:trans} with $y(0)=x\in B(0,R)$ and thus \eqref{e:expdecay}.

Consider $n=1$. For convenience,  we denote
\[
g(x, t)=\cL_2u_0(x,t).
\]
Intuitively, the limiting behavior of $u_1(x, t)$ is determined by $g(0, t)$. We now verify this.

Recall the definition of the operator $\cL_2$~\eqref{eq:differential-ops}, we have
\begin{align}\label{e:L2exp}
\nonumber
\sup_{x\in B(0,R)}(|g(x,t)|+|\nabla g(x, t)|)
&\leq C\Big(\|f\|_{C^3(B(0,R))}+ \|\Sigma\|_{C^1 (B(0,R))}\Big)\|u_0\|_{C^3(B(0,R))}\\
&\leq P\Big(\|\varphi\|_{C^3(B(0,R))},\|f\|_{C^4(B(0,R))},\|\Sigma\|_{C^1 (B(0,R))} \Big)e^{-\gamma t}
 \,,
\end{align}
where the last inequality followed from~ \eqref{e:u0deri}.
It follows that $\sup_{x\in B(0, R)}|u_1(x,t)|$ is uniformly bounded in $t$. Moreover,  we further split $u_1$ as
\begin{gather}
u_1(x, t)=\int_0^t g(0, t-s)\,ds+\int_0^t(g(y(s), t-s)- g(0, t-s))\,ds.
\end{gather}

The second term is controlled by directly by \eqref{e:L2exp} as
\begin{gather*}
\begin{split}
\left|\int_0^t(g(y(s), t-s)- g(0, t-s))\,ds \right| &\le \int_0^{t} \|\nabla g(\cdot, t-s)\|_{L^{\infty}(B(0, R))} 
|y(s)|\,ds \\
& \le P\Big(\|\varphi\|_{C^3(B(0,R))},\|f\|_{C^4(B(0,R))},\|\Sigma\|_{C^1 (B(0,R))} \Big) \int_0^t e^{-\gamma(t-s)}e^{-\gamma s}\,ds\\
& = P\Big(\|\varphi\|_{C^3(B(0,R))},\|f\|_{C^4(B(0,R))},\|\Sigma\|_{C^1 (B(0,R))} \Big) te^{-\gamma t} \\
&\leq  P\Big(\|\varphi\|_{C^3(B(0,R))},\|f\|_{C^4(B(0,R))},\|\Sigma\|_{C^1 (B(0,R))} \Big)  e^{-\gamma' t}\,,
\end{split}
\end{gather*}
where the last inequality followed from $t e^{-\gamma t}\le C(\gamma')e^{-\gamma' t}$ for any $\gamma'<\gamma$.

Regarding the first term in \eqref{e:L2exp}, we know that it converges to
\[
\varphi_1 =\int_0^\infty g(0,s)\,ds,
\]
 with the exponential rate. Hence, overall, we have
\begin{gather}
|u_1(x,t)-\varphi_1| \le P\Big(
\|\varphi\|_{C^3(B(0,R))}, \|f\|_{C^4(B(0,R))},
\|\Sigma\|_{C^1 (B(0,R))}\Big ) e^{-\gamma' t}\,.
\end{gather}

For the derivatives of $u_1$, we notice that
\begin{align}\label{e:u1deri}
\partial_t \left\|\nabla u_1\right\|^2= -2 \nabla u_1\cdot
  \nabla^2 f \cdot \nabla u_1-\nabla f\cdot \nabla \left\|\nabla u_1\right\|^2 + 2\sum_{j=1}^d\partial_j u_1\partial_j(\cL_2 u_0)\,.
\end{align}
Also we notice that
\begin{align*}
\sup_{x\in B(0,R)}\left|\partial_j (\cL_2
  u_0(x,t))\right|\leq
  C\Big(\|f\|_{C^3(B(0,R))}+\|\Sigma\|_{C^1(B(0,R))}\Big)\|u_0\|_{C^3(B(0,R))}\,.
\end{align*}
We use this in \eqref{e:u1deri}, and for the first term we use strong convexity of $f$ as well,  then get
\begin{align*}
\partial_t  \left\|\nabla u_1\right\|^2 &\leq -2 \gamma \left\|\nabla u_1\right\|^2 -\nabla f\cdot \nabla \left\|\nabla u_1\right\|^2\\
&~+ 2\sum_{j=1}^d |\partial_j u_1|~ P\Big(\|\varphi\|_{C^3(B(0,R))},\|f\|_{C^4(B(0,R))},\|\Sigma\|_{C^1(B(0,R))} \Big) e^{-\gamma t}\,.
\end{align*}
We then apply Young's inequality to further get for any $\gamma''< \gamma$,  there exists a polynomial $P$ in $\|\varphi\|_{C^3(B(0,R))}$, \\$\|f\|_{C^4(B(0,R))}$ and $\|\Sigma\|_{C^1(B(0,R))} $ such that 
\begin{align*}
\partial_t  \left\|\nabla u_1\right\|^2 &\leq - 2\gamma'' \left\|\nabla u_1\right\|^2 -\nabla f\cdot \nabla \left\|\nabla u_1\right\|^2\\
&~+P\Big(\|\varphi\|_{C^3(B(0,R))},\|f\|_{C^4(B(0,R))},\|\Sigma\|_{C^1(B(0,R))} \Big) e^{-2\gamma t}\,.
\end{align*}
Hence it holds that
\begin{align}
\sup_{x\in B(0,R)}\left\|\nabla u_1(x,t)\right\|\leq P\Big(\|\varphi\|_{C^3(B(0,R))},\|f\|_{C^4},\|\Sigma\|_{C^1(B(0,R))} \Big) e^{-\gamma'' t}\,.
\end{align}
For the higher derivatives of $u_1$, the analysis goes similarly as that of $u_0$. We also use induction here. Assume for any $k\leq m$, \eqref{e:underi} holds. For $k=m+1$, we denote $w=\sum_{J\in I_{m+1}}(\partial^Ju_1)^2$ and get
\begin{align*}
\partial_t w &\leq -2 (m+1)\gamma w-\nabla f \cdot \nabla w
-2 \sum_{i=1}^d\sum_{J\in I_{m+1}}\partial^J u_1
\sum_{J_0\leq J,|J_0|\geq 2} \partial^{J_0} \partial_i f \partial^{J-J_0}\partial_i u_1
+2\sum_{J\in I_{m+1}}(\partial^J u_1)\partial^J\Big(\cL_2 u_0\Big)\\
&\leq -2\gamma w -\nabla f\cdot \nabla w +P\Big( \|\varphi\|_{C^{m+3}(B(0,R))}, \|f\|_{C^{m+4}(B(0,R))},\|\Sigma\|_{C^{m+1}B(0,R)}\Big) e^{-2\gamma'' t}\,.
\end{align*}
From this we get
\begin{align*}
\sup_{x\in B(0,R)}w(x,t)\leq P\Big( \|\varphi\|_{C^{m+3}(B(0,R))},
  \|f\|_{C^{m+4}(B(0,R))},\|\Sigma\|_{C^{m+1}(B(0,R))}\Big) e^{-2\gamma'' t}\,.
\end{align*}
This shows that \eqref{e:underi} is true for $n=1$, $k=m+1$. Hence \eqref{e:underi} holds for all derivatives of $u_1$.

The analysis of $n\ge 2$ is similar to the case $n=1$ and
can be performed using induction. This completes the proof.
\end{proof}

\section{Formal Derivation of the Equations of the Measures}\label{eq:formaleq}

In this section, we aim to derive the equations \eqref{eq:nu0}-\eqref{eq:nu1} in a formal way. Observe that $\nu_0$ is a probability measure so the equation of $\nu_0$ can be derived from the empirical measure
$\frac{1}{N}\sum_i \delta(x-X_i(t))$ where each $X_i$ satisfies the transport equation \eqref{e:trans}. However, this cannot be generalized to the equation of $\nu_1$. Hence we adopt another different formal derivation as follows.

According to $u(x, t)=\mathbb{E}_x\varphi(X(t))$, we expect $u_0$ to be written as
\[
u_0(x, t)=\int_{\mathbb{R}^d} \varphi(y) G_0(dy, t; x).
\] 
According to the definition of $\nu_0$, one has
\[
\nu_0(\cdot, t)=\int_{\mathbb{R}^d} G_0(\cdot, t; x)\mu_0(dx),
\]
and thus  $G_0(dy, t; x)$ means the Green's function for the evolution of $\nu_0$ with initial condition $X(0)=x$, or $\delta(\cdot-x)$.  By the equation of $u_0(x, t)$, it is easy to find that $G_0$ satisfies
\begin{gather}\label{eq:evoGreen}
\partial_t G_0(\cdot, t; x)+\nabla f(x)\cdot \nabla_x G_0(\cdot, t; x)=0.
\end{gather}
Due to the Markovian property of the dynamics, we can easily infer that the measure $\nu_0$ satisfies
\begin{gather}
\nu_0(\cdot, t)=\int_{\mathbb{R}^d} G_0(\cdot, t-s; y) \nu_0(dy; s)=:\mathcal{I}_{t-s}^{(0)}\nu_0(\cdot; s).
\end{gather}
Here, $\mathcal{I}_{t-s}^{(0)}$ is the evolution operator. Using \eqref{eq:evoGreen}, one finds
\[
\partial_t \int_{\mathbb{R}^d} G_0(\cdot, t-s; x)\nu_0(dx; s)+\int_{\mathbb{R}^d} \nu_0(dx; s) \nabla f(x)\cdot \nabla_x \mathcal{I}_{t-s}^{(0)}\delta(\cdot-x)=0
\]
or
\[
\partial_t \nu_0(\cdot, t)-\int_{\mathbb{R}^d} \nabla_x\cdot\Big(\nabla f(x)\nu_0(dx; s)\Big) \mathcal{I}_{t-s}^{(0)}\delta(\cdot-x)=0.
\]
Since $ \mathcal{I}_{t-s}^{(0)}$ is independent of $x$, the second term is then reduced to
\[
-\int_{\mathbb{R}^d} \nabla_x\cdot\Big(\nabla f(x)\nu_0(dx; s)\Big) \mathcal{I}_{t-s}^{(0)}\delta(\cdot-x)
=-\mathcal{I}_{t-s}^{(0)}\int_{\mathbb{R}^d} \nabla_x\cdot\Big(\nabla f(x)\nu_0(dx; s)\Big) \delta(\cdot-x)
=-\mathcal{I}_{t-s}^{(0)}\nabla\cdot(\nabla f \nu_0(\cdot; s)).
\]
Taking $t\to s$, one obtains the equation for $\nu_0$.

Similarly, let $G_1(\cdot, t; x)$ satisfy the following inhomogeneous equation
\begin{gather}\label{eq:G1}
\partial_t G_1(\cdot, t; x)+\nabla f(x)\cdot \nabla_x G_1(\cdot, t; x)=-\frac{1}{4}\nabla \|\nabla f\|^2\cdot\nabla_x G_0(\cdot, t; x)
+\frac{1}{2}\mathrm{Tr}(\Sigma \nabla_x^2 G_0(\cdot, t; x)),~~~G_1(\cdot, 0; x)=0.
\end{gather}
Then, we have
\begin{gather}
u_1(x, t)=\int_{\mathbb{R}^d} \varphi(y) G_1(y, t; x)dy,
\end{gather}
and
\begin{gather}\label{eq:zeroG1}
\nu_1(\cdot, t)=\int_{\mathbb{R}^d} G_1(\cdot, t; x)\mu_0(dx).
\end{gather}

By the linearity, one has
\begin{gather}\label{eq:G1superposition}
\nu_1(\cdot, t)=\int_{\mathbb{R}^d} G_1(\cdot, t-s; x)\nu_0(dx, s)+\mathcal{I}_{t-s}^{(0)}\nu_1(\cdot, s).
\end{gather}
The first term arises from \eqref{eq:zeroG1} with zero initial data while the second term is from the homogeneous part with initial data $\nu_1(\cdot, s)$.
Setting $t\to t-s$ in \eqref{eq:G1}, multiplying $\nu_0(dx, s)$ and integrating, one has
\[
\partial_t\int_{\mathbb{R}^d} G_1(\cdot, t-s; x)\nu_0(dx, s)-\int_{\mathbb{R}^d} G_1(\cdot, t-s; x)\nabla\cdot(\nabla f(x)\nu_0(dx, s))=\mathcal{I}_{t-s}^{(0)}\left(\frac{1}{4}\nabla\cdot(\nabla\|\nabla f\|^2 \nu_0)+\frac{1}{2}\partial_{ij}(\Sigma_{ij}\nu_0)\right).
\]
Clearly, the second term $\mathcal{I}_{t-s}^{(0)}\nu_1(\cdot, s)$ satisfies
\[
\partial_t\mathcal{I}_{t-s}^{(0)}\nu_1(\cdot, s)-\nabla\cdot(\nabla f \mathcal{I}_{t-s}^{(0)}\nu_1(\cdot, s))=0.
\]
Adding the above two equations up and taking $t\to s$ yields
\[
\partial_t\nu_1-\nabla\cdot(\nabla f \nu_1)=\frac{1}{4}\nabla\cdot(\nabla\|\nabla f\|^2 \nu_0)+\frac{1}{2}\partial_{ij}(\Sigma_{ij}\nu_0).
\]

\begin{remark}
The generalization to $\nu_n$ for $n\ge 2$ is more involved and the equation for $\nu_n$ is similar to $\nu_1$. The key relation is some anology of \eqref{eq:G1superposition}, given by $\nu_n(\cdot, t)=\sum_{m=0}^n \int G_m(\cdot, t-s; x)\nu_{n-m}(dx,s)$ due to linearity. (In fact, one may also expand the Fokker-Planck equation for the diffusion approximation in terms of $\eta$ to obatin the equations for $\nu_n$. However, this type of derivation does not give the inisight into the dynamics.)
\end{remark}

\section{Computations for the Numerical Examples}\label{sec:examples-full}
In this appendix we include detailed computations used in the numerical examples in Section~\ref{sec:numer-exper}, where the domain is assumed to be one-dimensional ($d=1$). Note that $u_{0}$ is determined by the initial value problem
\begin{equation}
\label{eq:ivp-u0}
  \begin{aligned}
    &\partial_t u_0+f'\left( x \right)\partial_x u_0 \left( x,t \right)=0,\\
    & u_0(x,0)=\varphi(x).
  \end{aligned}
\end{equation}
The equation of the characteristic lines is
\begin{equation}
\label{eq:charac-line-eq-prim}
  \frac{\mathrm{d}x}{\mathrm{d}t}=f' \left( x \left( t \right) \right)
\end{equation}
which gives
\begin{equation}
\label{eq:charac-line-eq}
  t=\int_0^t \frac{\mathrm{d}x \left( t \right)}{f' \left( x \left( t \right) \right)}=\int_{x_0 \left( x,t \right)}^x \frac{\mathrm{d}\xi}{f' \left( \xi \right)}
\end{equation}
where $x_0=x_0 \left( x,t \right)$ is the intercept of the characteristic line passing through the point $\left( x,t \right)\in\mathbb{R}\times\mathbb{R}_{\geq 0}$. Therefore,
\begin{equation}
\label{eq:u0}
  \begin{aligned}
    u_0 \left( x,t \right)
    =\varphi \left( x_0 \left( x,t \right) \right).
  \end{aligned}
\end{equation}
Using implicit differentiation rules, one easily deduce from \eqref{eq:charac-line-eq} that
\begin{equation*}
    \partial_tx_0 \left( x,t \right)=-f' \left( x_0 \left( x,t \right) \right),\qquad
    \partial_xx_0 \left( x,t \right)=\frac{f' \left( x_0 \left( x,t \right) \right)}{f' \left( x \right)}
\end{equation*}
with which one easily verifies that \eqref{eq:u0} is the solution of the initial value problem \eqref{eq:ivp-u0}.

Furthermore, $u_1$ is determined by the initial value problem
\begin{equation}
\label{eq:ivp-u1}
  \begin{aligned}
    &\partial_t u_1+f'\left( x \right)\partial_x u_1 \left( x,t \right)=\cL_2u_0 \left( x,t \right),\\
    & u_1(x,0)=0.
  \end{aligned}
\end{equation}
Without loss of generality, we will assume $\Sigma=\frac{1}{4}$, which is the variance of a Bernoulli random variable with parameter $p=1/2$. Using \eqref{eq:u0} and \eqref{eq:differential-ops}, we have
\begin{equation*}
  \begin{aligned}
    \cL_2u_0 \left( x,t \right)&=-\frac{1}{2}f' \left( x \right)f'' \left( x \right)\partial_xu_0 \left( x,t \right)+\frac{1}{8}\partial_x^2u_0 \left( x,t \right)\\
    &=-\frac{1}{2}f' \left( x_0 \left( x,t \right) \right)\varphi' \left( x_0 \left( x,t \right) \right)f'' \left( x \right)+\frac{1}{8}\frac{\partial}{\partial x}\left[ \frac{f' \left( x_0 \left( x,t \right) \right)}{f' \left( x \right)} \right]\varphi' \left( x_0 \left( x,t \right) \right)+\frac{1}{8}\left[ \frac{f' \left( x_0 \left( x,t \right) \right)}{f' \left( x \right)} \right]^2\varphi''\left( x_0 \left( x,t \right) \right)
  \end{aligned}
\end{equation*}
in which the middle term in the right hand side can be further expanded into
\begin{equation*}
  \frac{1}{8}\varphi' \left( x_0 \left( x,t \right) \right)\frac{f' \left( x \right)f'' \left( x_0 \left( x,t \right) \right)\partial_xx_0 \left( x,t \right)-f'\left( x_0 \left( x,t \right) \right)f'' \left( x \right)}{\left[ f' \left( x \right) \right]^2}=\frac{1}{8}\varphi' \left( x_0 \left( x,t \right) \right)\frac{f' \left( x_0 \left( x,t \right) \right)\left[ f'' \left( x_0 \left( x,t \right) \right)-f'' \left( x \right) \right]}{\left[ f' \left( x \right) \right]^2}.
\end{equation*}
The equation of characteristic lines for \eqref{eq:ivp-u1} is the same as \eqref{eq:charac-line-eq}. Using the boundary condition $u_1 \left( x,0 \right)=0$, we have
\begin{equation*}
  \begin{aligned}
    &u_1 \left( x \left( t \right),t \right)=\int_0^t\cL_2u_0 \left( x \left( t \right),t \right)\,\mathrm{d}t\\
    &=-\frac{1}{2}f' \left( x_0 \right)\varphi' \left( x_0 \right)\int_0^tf'' \left( x \left( t \right) \right)\,\mathrm{d}t+\frac{1}{8}f' \left( x_0 \right)\varphi' \left( x_0 \right)\int_0^t \frac{f'' \left( x_0 \right)-f'' \left( x \left( t \right) \right)}{\left[ f' \left( x \left( t \right) \right) \right]^2}\,\mathrm{d}t+\frac{1}{8}\left[ f' \left( x_0 \right) \right]^2\varphi'' \left( x_0 \right)\int_0^t \frac{\mathrm{d}t}{\left[ f' \left( x \left( t \right) \right) \right]^2}\\
    &=:(\textrm{\RNum{1}})+(\textrm{\RNum{2}})+(\textrm{\RNum{3}})
  \end{aligned}
\end{equation*}
where we adopted the simplifying notation $x_0\equiv x_0 \left( x \left( t \right),t \right)$ for the constant along the characteristic line $x=x \left( t \right)$. By means of \eqref{eq:charac-line-eq-prim}, we can further simplify the three terms on the right hand side:
\begin{align*}
  (\textrm{\RNum{1}})&=-\frac{1}{2}f' \left( x_0 \right)\varphi' \left( x_0 \right)\int_{x_0}^x \frac{\mathrm{d}}{\mathrm{d}\xi} \log f' \left( \xi \right)\,\mathrm{d}\xi=-\frac{1}{2}f' \left( x_0 \right)\varphi' \left( x_0 \right)\log \frac{f' \left( x \right)}{f' \left( x_0 \right)},\\
(\textrm{\RNum{2}})&=\frac{1}{8}f' \left( x_0 \right)\varphi' \left( x_0 \right)\int_{x_0}^x \frac{f'' \left( x_0 \right)-f'' \left( \xi \right)}{\left[ f' \left( \xi \right) \right]^3}\,\mathrm{d}\xi,\\
(\textrm{\RNum{3}})&=\frac{1}{8}\left[ f' \left( x_0 \right) \right]^2\varphi'' \left( x_0 \right)\int_{x_0}^x \frac{\mathrm{d}\xi}{\left[ f' \left( \xi \right) \right]^3}.
\end{align*}
Therefore,
\begin{equation}
  \begin{aligned}
    u_1 \left( x,t \right)=-\frac{1}{2}f' \left( x_0 \left( x,t \right) \right)\varphi' \left( x_0\left( x,t \right) \right)\log \frac{f' \left( x \right)}{f' \left( x_0\left( x,t \right) \right)}&+\frac{1}{8}f' \left( x_0 \left( x,t \right)\right)\varphi' \left( x_0 \left( x,t \right)\right)\int_{x_0 \left( x,t \right)}^x \frac{f'' \left( x_0 \left( x,t \right) \right)-f'' \left( \xi \right)}{\left[ f' \left( \xi \right) \right]^3}\,\mathrm{d}\xi\\
   &+\frac{1}{8}\left[ f' \left( x_0 \left( x,t \right) \right) \right]^2\varphi'' \left( x_0 \left( x,t \right)\right)\int_{x_0 \left( x,t \right)}^x \frac{\mathrm{d}\xi}{\left[ f' \left( \xi \right) \right]^3}.
  \end{aligned}
\end{equation}
Alternatively, we can also write $u_1$ in the following equivalent form:
\begin{equation}
\label{eq:u1-alt}
  \begin{aligned}
    u_1 &\left( x,t \right)=-\frac{1}{2}f' \left( x_0 \left( x,t \right) \right)\varphi' \left( x_0\left( x,t \right) \right)\log \frac{f' \left( x \right)}{f' \left( x_0\left( x,t \right) \right)}+\frac{1}{8}f' \left( x_0 \left( x,t \right)\right)f'' \left( x_0 \left( x,t \right) \right)\varphi' \left( x_0 \left( x,t \right)\right)\int_{x_0 \left( x,t \right)}^x \frac{\mathrm{d}\xi}{\left[ f' \left( \xi \right) \right]^3}\\
   &-\frac{1}{8}f' \left( x_0 \left( x,t \right)\right)\varphi' \left( x_0 \left( x,t \right)\right)\int_{x_0 \left( x,t \right)}^x \frac{f'' \left( \xi \right)}{\left[ f' \left( \xi \right) \right]^3}\mathrm{d}\xi+\frac{1}{8}\left[ f' \left( x_0 \left( x,t \right) \right) \right]^2\varphi'' \left( x_0 \left( x,t \right)\right)\int_{x_0 \left( x,t \right)}^x \frac{\mathrm{d}\xi}{\left[ f' \left( \xi \right) \right]^3}\\
  =&-\frac{1}{2}f' \left( x_0 \left( x,t \right) \right)\varphi' \left( x_0\left( x,t \right) \right)\log \frac{f' \left( x \right)}{f' \left( x_0\left( x,t \right) \right)}+\frac{1}{8}f' \left( x_0 \left( x,t \right)\right)f'' \left( x_0 \left( x,t \right) \right)\varphi' \left( x_0 \left( x,t \right)\right)\int_{x_0 \left( x,t \right)}^x \frac{\mathrm{d}\xi}{\left[ f' \left( \xi \right) \right]^3}\\
   &-\frac{1}{16}f' \left( x_0 \left( x,t \right)\right)\varphi' \left( x_0 \left( x,t \right)\right)\left\{ \frac{1}{\left[ f' \left( x_0 \left( x,t \right) \right) \right]^2}-\frac{1}{\left[ f' \left( x \right) \right]^2}\right\}+\frac{1}{8}\left[ f' \left( x_0 \left( x,t \right) \right) \right]^2\varphi'' \left( x_0 \left( x,t \right)\right)\int_{x_0 \left( x,t \right)}^x \frac{\mathrm{d}\xi}{\left[ f' \left( \xi \right) \right]^3}.
  \end{aligned}
\end{equation}

\bibliographystyle{unsrt}
\bibliography{ref}

\begin{thebibliography}{10}

\bibitem{RM1985}
Herbert Robbins and Sutton Monro.
\newblock {A Stochastic Approximation Method}.
\newblock In {\em Herbert Robbins Selected Papers}, pages 102--109. Springer,
  1985.

\bibitem{Zhang2004}
Tong Zhang.
\newblock Solving large scale linear prediction problems using stochastic
  gradient descent algorithms.
\newblock In {\em Proceedings of the twenty-first international conference on
  Machine learning}, page 116. ACM, 2004.

\bibitem{SSSS2009}
Shai Shalev-Shwartz, Ohad Shamir, Nathan Srebro, and Karthik Sridharan.
\newblock {Stochastic Convex Optimization}.
\newblock In {\em COLT}, 2009.

\bibitem{MB2011}
Eric Moulines and Francis~R. Bach.
\newblock Non-asymptotic analysis of stochastic approximation algorithms for
  machine learning.
\newblock In J.~Shawe-Taylor, R.~S. Zemel, P.~L. Bartlett, F.~Pereira, and
  K.~Q. Weinberger, editors, {\em Advances in Neural Information Processing
  Systems 24}, pages 451--459. Curran Associates, Inc., 2011.

\bibitem{SZ2013}
Ohad Shamir and Tong Zhang.
\newblock Stochastic gradient descent for non-smooth optimization: Convergence
  results and optimal averaging schemes.
\newblock In {\em International Conference on Machine Learning}, pages 71--79,
  2013.

\bibitem{BM2013}
Francis Bach and Eric Moulines.
\newblock {Non-strongly-convex smooth stochastic approximation with convergence
  rate $O(1/n)$}.
\newblock In {\em Advances in neural information processing systems}, pages
  773--781, 2013.

\bibitem{WSCLNMKCGM2016}
Yonghui Wu, Mike Schuster, Zhifeng Chen, Quoc~V Le, Mohammad Norouzi, Wolfgang
  Macherey, Maxim Krikun, Yuan Cao, Qin Gao, and Klaus Macherey.
\newblock Google's neural machine translation system: Bridging the gap between
  human and machine translation.
\newblock {\em arXiv preprint arXiv:1609.08144}, 2016.

\bibitem{KS2017}
Nitish~Shirish Keskar and Richard Socher.
\newblock {Improving generalization performance by switching from Adam to SGD}.
\newblock {\em arXiv preprint arXiv:1712.07628}, 2017.

\bibitem{WRSSR2017}
Ashia~C Wilson, Rebecca Roelofs, Mitchell Stern, Nati Srebro, and Benjamin
  Recht.
\newblock The marginal value of adaptive gradient methods in machine learning.
\newblock In I.~Guyon, U.~V. Luxburg, S.~Bengio, H.~Wallach, R.~Fergus,
  S.~Vishwanathan, and R.~Garnett, editors, {\em Advances in Neural Information
  Processing Systems 30}, pages 4148--4158. Curran Associates, Inc., 2017.

\bibitem{PJ1992}
Boris~T Polyak and Anatoli~B Juditsky.
\newblock Acceleration of stochastic approximation by averaging.
\newblock {\em SIAM Journal on Control and Optimization}, 30(4):838--855, 1992.

\bibitem{RSS2012}
Alexander Rakhlin, Ohad Shamir, and Karthik Sridharan.
\newblock Making gradient descent optimal for strongly convex stochastic
  optimization.
\newblock In {\em ICML}, volume~12, pages 1571--1578, 2012.

\bibitem{JZ2013}
Rie Johnson and Tong Zhang.
\newblock Accelerating stochastic gradient descent using predictive variance
  reduction.
\newblock In {\em Advances in neural information processing systems}, pages
  315--323, 2013.

\bibitem{DBL2014}
Aaron Defazio, Francis Bach, and Simon Lacoste-Julien.
\newblock Saga: A fast incremental gradient method with support for
  non-strongly convex composite objectives.
\newblock In {\em Advances in neural information processing systems}, pages
  1646--1654, 2014.

\bibitem{RHSPS2015}
Sashank~J Reddi, Ahmed Hefny, Suvrit Sra, Barnabas Poczos, and Alexander~J
  Smola.
\newblock On variance reduction in stochastic gradient descent and its
  asynchronous variants.
\newblock In {\em Advances in Neural Information Processing Systems}, pages
  2647--2655, 2015.

\bibitem{GOP2015}
Mert G{\"u}rb{\"u}zbalaban, Asuman Ozdaglar, and Pablo Parrilo.
\newblock A globally convergent incremental newton method.
\newblock {\em Mathematical Programming}, 151(1):283--313, 2015.

\bibitem{dieuleveut2017}
A.~Dieuleveut, A.~Durmus, and F.~Bach.
\newblock Bridging the gap between constant step size stochastic gradient
  descent and {M}arkov chains.
\newblock {\em arXiv preprint arXiv:1707.06386}, 2017.

\bibitem{JKNvW2018}
Arnulf Jentzen, Benno Kuckuck, Ariel Neufeld, and Philippe von Wurstemberger.
\newblock Strong error analysis for stochastic gradient descent optimization
  algorithms.
\newblock {\em arXiv preprint arXiv:1801.09324}, 2018.

\bibitem{lte17}
Qianxiao Li, Cheng Tai, and Weinan E.
\newblock Stochastic modified equations and adaptive stochastic gradient
  algorithms.
\newblock {\em arXiv preprint arXiv:1611.08642}, 2017.

\bibitem{LTE2018}
Qianxiao Li, Cheng Tai, and E~Weinan.
\newblock {Stochastic Modified Equations and Dynamics of Stochastic Gradient
  Algorithms I: Mathematical Foundations}.
\newblock {\em arXiv preprint arXiv:1811.01558}, 2018.

\bibitem{feng2017}
Yuanyuan Feng, Lei Li, and Jian-Guo Liu.
\newblock Semi-groups of stochastic gradient descent and online principal
  component analysis: properties and diffusion approximations.
\newblock {\em Commun. Math. Sci.}, 16(3), 2018.

\bibitem{hulililiu2018}
W.~Hu, C.~J. Li, L.~Li, and J.-G. Liu.
\newblock On the diffusion approximation of nonconvex stochastic gradient
  descent.
\newblock {\em Ann. Math. Sci. Appl.}, 2018.
\newblock arXiv preprint arXiv:1705.07562.

\bibitem{debussche2012weak}
Arnaud Debussche and Erwan Faou.
\newblock {Weak backward error analysis for SDEs}.
\newblock {\em SIAM Journal on Numerical Analysis}, 50(3):1735--1752, 2012.

\bibitem{shardlow2006modified}
Tony Shardlow.
\newblock Modified equations for stochastic differential equations.
\newblock {\em BIT Numerical Mathematics}, 46(1):111--125, 2006.

\bibitem{abdulle2012high}
Assyr Abdulle, David Cohen, Gilles Vilmart, and Konstantinos~C Zygalakis.
\newblock High weak order methods for stochastic differential equations based
  on modified equations.
\newblock {\em SIAM Journal on Scientific Computing}, 34(3):A1800--A1823, 2012.

\bibitem{abdulle2014high}
Assyr Abdulle, Gilles Vilmart, and Konstantinos~C Zygalakis.
\newblock {High order numerical approximation of the invariant measure of
  ergodic SDEs}.
\newblock {\em SIAM Journal on Numerical Analysis}, 52(4):1600--1622, 2014.

\bibitem{kopec2014weak}
Marie Kopec.
\newblock Weak backward error analysis for overdamped langevin processes.
\newblock {\em IMA Journal of Numerical Analysis}, 35(2):583--614, 2014.

\bibitem{kopec2015weak}
Marie Kopec.
\newblock Weak backward error analysis for langevin process.
\newblock {\em BIT Numerical Mathematics}, 55(4):1057--1103, 2015.

\bibitem{SS17}
Justin Sirignano and Konstantinos Spiliopoulos.
\newblock Stochastic gradient descent in continuous time.
\newblock {\em SIAM J. Financial Math.}, 8(1):933--961, 2017.

\bibitem{sirignano2017stochastic}
Justin Sirignano and Konstantinos Spiliopoulos.
\newblock Stochastic gradient descent in continuous time: A central limit
  theorem.
\newblock {\em arXiv preprint arXiv:1710.04273}, 2017.

\bibitem{Fiori2005}
Simone Fiori.
\newblock Quasi-geodesic neural learning algorithms over the orthogonal group:
  A tutorial.
\newblock {\em Journal of Machine Learning Research}, 6(May):743--781, 2005.

\bibitem{HM2012}
Uwe Helmke and John~B Moore.
\newblock {\em Optimization and dynamical systems}.
\newblock Springer Science \& Business Media, 2012.

\bibitem{DSE2012}
Hans-Bernd D{\"o}rr, Erkin Saka, and Christian Ebenbauer.
\newblock A smooth vector field for quadratic programming.
\newblock In {\em Decision and Control (CDC), 2012 IEEE 51st Annual Conference
  on}, pages 2515--2520. IEEE, 2012.

\bibitem{ORXYY2016}
Stanley Osher, Feng Ruan, Jiechao Xiong, Yuan Yao, and Wotao Yin.
\newblock Sparse recovery via differential inclusions.
\newblock {\em Applied and Computational Harmonic Analysis}, 41(2):436--469,
  2016.

\bibitem{SBC2016}
Weijie Su, Stephen Boyd, and Emmanuel~J. Cand{{\`e}}s.
\newblock A differential equation for modeling nesterov's accelerated gradient
  method: Theory and insights.
\newblock {\em Journal of Machine Learning Research}, 17(153):1--43, 2016.

\bibitem{Oksendal2003}
B.~{\O}ksendal.
\newblock {\em {Stochastic differential equations: an introduction with
  applications}}.
\newblock Springer, Berlin, Heidelberg, sixth edition, 2003.

\bibitem{chaudhari2016entropy}
Pratik Chaudhari, Anna Choromanska, Stefano Soatto, Yann LeCun, Carlo Baldassi,
  Christian Borgs, Jennifer Chayes, Levent Sagun, and Riccardo Zecchina.
\newblock {Entropy-SGD: Biasing gradient descent into wide valleys}.
\newblock {\em arXiv preprint arXiv:1611.01838}, 2016.

\bibitem{chaudhari2018deep}
Pratik Chaudhari, Adam Oberman, Stanley Osher, Stefano Soatto, and Guillaume
  Carlier.
\newblock Deep relaxation: partial differential equations for optimizing deep
  neural networks.
\newblock {\em Research in the Mathematical Sciences}, 5(3):30, 2018.

\bibitem{JKKNPS2017}
Prateek Jain, Sham~M Kakade, Rahul Kidambi, Praneeth Netrapalli,
  Venkata~Krishna Pillutla, and Aaron Sidford.
\newblock A markov chain theory approach to characterizing the minimax
  optimality of stochastic gradient descent (for least squares).
\newblock {\em arXiv preprint arXiv:1710.09430}, 2017.

\bibitem{MHB2017}
Stephan Mandt, Matthew~D. Hoffman, and David~M. Blei.
\newblock Stochastic gradient descent as approximate bayesian inference.
\newblock {\em Journal of Machine Learning Research}, 18(134):1--35, 2017.

\bibitem{LSLS2018}
Shiyu Liang, Ruoyu Sun, Jason~D Lee, and R.~Srikant.
\newblock Adding one neuron can eliminate all bad local minima.
\newblock In S.~Bengio, H.~Wallach, H.~Larochelle, K.~Grauman, N.~Cesa-Bianchi,
  and R.~Garnett, editors, {\em Advances in Neural Information Processing
  Systems 31}, pages 4355--4365. Curran Associates, Inc., 2018.

\bibitem{DF1999}
Persi Diaconis and David Freedman.
\newblock Iterated random functions.
\newblock {\em SIAM review}, 41(1):45--76, 1999.

\bibitem{Vil03}
C\'{e}dric Villani.
\newblock {\em Topics in optimal transportation}, volume~58 of {\em Graduate
  Studies in Mathematics}.
\newblock American Mathematical Society, Providence, RI, 2003.

\end{thebibliography}

\end{document}